\theoremstyle{plain}
\newtheorem{theorem}{Theorem}
\newtheorem{assumption}{Assumption}
\newtheorem{lemma}{Lemma}
\newtheorem{proposition}{Proposition}
\theoremstyle{definition}
\theoremstyle{remark}
\newtheorem{remark}[theorem]{Remark}
\begin{document}

\title{Decentralized Personalized Federated Learning based on a Conditional ‘Sparse-to-Sparser’ Scheme}

\author{
    Qianyu Long, Qiyuan Wang, Christos Anagnostopoulos,
    Daning Bi
    \thanks{Qianyu Long, Qiyuan Wang, and Christos Anagnostopoulos are with the School of Computing Science, University of Glasgow, Glasgow G12 8QQ, United Kingdom (e-mail: christos.anagnostopoulos@glasgow.ac.uk \& q.long.1@research.gla.ac.uk).}
    \thanks{Daning Bi is with the College of Finance and Statistics, Hunan University Finance Campus, Changsha City, China.}
}



\maketitle


\begin{abstract}
Decentralized Federated Learning (DFL) has become popular due to its robustness and avoidance of centralized coordination. In this paradigm, clients actively engage in training by exchanging models with their networked neighbors. However, DFL introduces increased costs in terms of training and communication. Existing methods focus on minimizing communication often overlooking training efficiency and data heterogeneity. To address this gap, we propose a novel \textit{sparse-to-sparser} training scheme: 
DA-DPFL. DA-DPFL initializes with a subset of model parameters, which progressively reduces during training via \textit{dynamic aggregation} and leads to substantial energy savings while retaining adequate information during critical learning periods. 

Our experiments showcase that DA-DPFL substantially outperforms DFL baselines in test accuracy, while achieving up to $5$ times reduction in energy costs. We provide a theoretical analysis of DA-DPFL's convergence by solidifying its applicability in decentralized and personalized learning. The code is available at:https://github.com/EricLoong/da-dpfl
\end{abstract}

\begin{IEEEkeywords}
Personalized Federated Learning, Model Pruning, Sparsification, Decentralized Federated Learning.
\end{IEEEkeywords}

\section{Introduction}
Large-scale Deep Neural Networks (DNNs) have gained significant attention due to their high performance on complex tasks. 
The Vision Transformer, ViT-4 \cite{Zhai2022} by Google is a prime example of achieving a new state-of-the-art on ImageNet \cite{Deng2009} with top-1 accuracy of $90.45\%$. 
The success of \textit{centralized} training of DNNs motivated the counterpart \textit{decentralized} training based on Federated Learning (FL) \cite{McMahan2017}. FL involves distributed clients' data in DNN training addressing challenges like privacy \cite{Li2021} by transmitting only model weights and/or gradients instead of raw data. However, FL faces two fundamental challenges \cite{Li2020}: \textit{expensive communication} and \textit{statistical heterogeneity}. Reducing the communication cost due to clients disseminating big-sized DNNs can be achieved by compressing information exchange while attaining model convergence. 
Gradient sparsification and quantization \cite{Sun2019, Lin} significantly reduce communication cost. 
Model pruning \cite{Jiang2022, isik2022sparse,long2023feddip,dai2022dispfl,Li2021b} not only reduces communication cost but also accelerates local training. To alleviate statistical heterogeneity and cope with non-independent and identically distributed (non-i.i.d.) data, Personalized FL (PFL) emerges to allow a local (\textit{personalized}) model \textit{per} client rather than a global one shared among clients. Though PFL is still in its infancy, a plethora of works \cite{ZhangFOMO, dai2022dispfl,Li2021a,Huang2022,Wang2023,huang2022fusion,Li2021b} shows its efficiency in data heterogeneity. 

FL is classified into Centralized FL (CFL) and Decentralized FL (DFL), differentiated by clients' communication methods during training. CFL, exemplified by FedAvg \cite{McMahan2017}, involves a central server coordinating client model aggregation, posing risks of server-targeted attacks and a single point of failure. In contrast, DFL \cite{yuan2023decentralized} offers privacy enhancements and risk mitigation by enabling direct, dynamic, non-hierarchical client interactions within various network topologies such as line/bus, ring, star, or mesh.
To address communication cost, model/gradient compression-based DFL has been proposed \cite{dai2022dispfl, sun2022decentralized, zhao2022beer,yau2023docom}, where local models are pruned/quantized to achieve competitive performance similar to dense (non-pruned) models.
These approaches match communication costs for the busiest servers as CFL, having higher overall communication and training costs attributed to decentralized hybrid topology (each client can act as a server). 
Although DFL with \textit{pointing} protocol, i.e., learning from previously trained models of clients in a sequential line one-peer-to-one-peer, can expedite convergence \cite{yuan2023decentralized}, it struggles with data heterogeneity.
Overall, while FL frameworks target learning from decentralized data, they often overlook either statistical heterogeneity, as in DFL, or efficient training and communication, as in PFL. This highlights the need for an integrated approach that effectively balances communication, training efficiency, and data heterogeneity across different FL paradigms.
We contribute with a novel \textbf{D}ynamic \textbf{A}ggregation \textbf{D}ecentralized \textbf{PFL} framework, coined as \textbf{DA-DPFL}, 
that (i) further reduces communication and training costs, (ii) expedites convergence, and (iii) overcomes data heterogeneity. 
DA-DPFL incorporates two main elements: a fair dynamic scheduling for aggregation of personalized models \textit{and} a dynamic pruning policy. The innovative scheduling policy allows clients in DA-DPFL to \textit{reuse} trained models \textit{within} the same communication round, which significantly accelerates convergence. Moreover, DA-DPFL involves optimized pruning timing to conduct further pruning, i.e., \textit{sparse-to-sparser training}, which does not violate clients' computing capacities while achieving communication, training, and inference efficiency. The trade-off is the controlled latency incurred as some clients await the completion of tasks by their neighbors. We comprehensively assess and compare DA-DPFL with baselines in CFL and DFL to showcase the advantage of dynamic pruning and aggregation in PFL. \\
\textbf{Our major technical contributions are:}
\textbf{(i)} We innovatively align a dynamic aggregation framework to allow clients \textit{reuse} previous 
models for local training within the same communication round. 
\textbf{(ii)} By measuring model compressibility, we propose a \textit{further} pruning strategy, which effectively accommodates and extends existing sparse training techniques in DFL. 
\textbf{(iii)} Compared with both CFL and DFL baselines, 
our comprehensive experiments showcase that DA-DPFL achieves comparative or even superior model performance across various tasks and DNN architectures. 
\textbf{(iv)} The proposed learning method with \textit{dynamic} aggregation achieves the highest energy and communication efficiency. 
\textbf{(v)} We provide a theoretical convergence analysis, which aligns with experimental observations.

\section{Related Work}
\textbf{Efficient FL:} In distributed ML, communication and training costs are significant challenges. 
LAQ \cite{Sun2019} and DGC \cite{Lin} methods reduce communication costs through gradient quantization and deep gradient compression techniques, respectively. Model compression, notably \textit{pruning}, plays a key role in alleviating device storage constraints, as demonstrated by PruneFL \cite{Jiang2022}, FedDST \cite{bibikar2022federated}, and FedDIP \cite{long2023feddip}, which achieve sparsity in model pruning. 
pFedGate \cite{chen2023efficient} addresses the challenges by adaptively learning sparse local models with a trainable gating layer, enhancing model capacity and efficiency. FedPM \cite{isik2022sparse} and FedMask \cite{Li2021b} focus on efficient model communication using probability masks; with FedMask providing personalized, sparse DNNs for clients and FedPM employing Bayesian aggregation. However, while significant advancements have been made in reducing communication costs \cite{isik2022sparse, Sun2019, Lin, Li2021b}, 
only a few methods \cite{Jiang2022, long2023feddip,chen2023efficient} address reducing training costs through sparse mask learning.

\textbf{Personalized FL:} In FL, addressing data heterogeneity necessitates personalization of global model as achieved by e.g., FedMask \cite{Li2021b}, FedSpa \cite{Huang2022}, and DisPFL \cite{dai2022dispfl} using personalized masks. 
Ditto \cite{Li2021a} offers a fair personalization framework through global-regularized multitask FL, 
while FOMO \cite{ZhangFOMO} focuses on first-order optimization for personalized learning. 
FedABC \cite{Wang2023} employs a `one-vs-all' strategy and binary classification loss for class imbalance and unfair competition, 
while FedSLR \cite{huang2022fusion} integrates low-rank global knowledge for efficient downloading during communication. 
However, such approaches increase training costs highlighting the need for more efficient training methods.

\textbf{Decentralized FL:} Since the work \cite{lalitha2018fully}, DFL emerged as a robust distributed learning paradigm, enabling clients to collaboratively train models with their neighbors, thereby enhancing privacy and reducing reliance on central servers. 
In DFL, increased client interaction leads to methods like DFedAvgM \cite{sun2022decentralized}, which extends FedAvg to decentralized context with momentum SGD, and BEER \cite{zhao2022beer} for non-convex optimization that enhances convergence through communication compression and gradient tracking. GossipFL \cite{tang2022gossipfl} uses bandwidth information to create a gossip matrix allowing communication with one peer using sparsified gradients, reducing communication. DFedSAM \cite{shi2023improving} considers utilizing Sharpness-Aware-Minimization (SAM) optimizer, while DisPFL \cite{dai2022dispfl} utilizes RigL-like pruning in decentralized sparse training to lower generalization error and communication costs. 

\begin{figure*}[htbp]
    \begin{center}
        \includegraphics[width=0.8\textwidth]{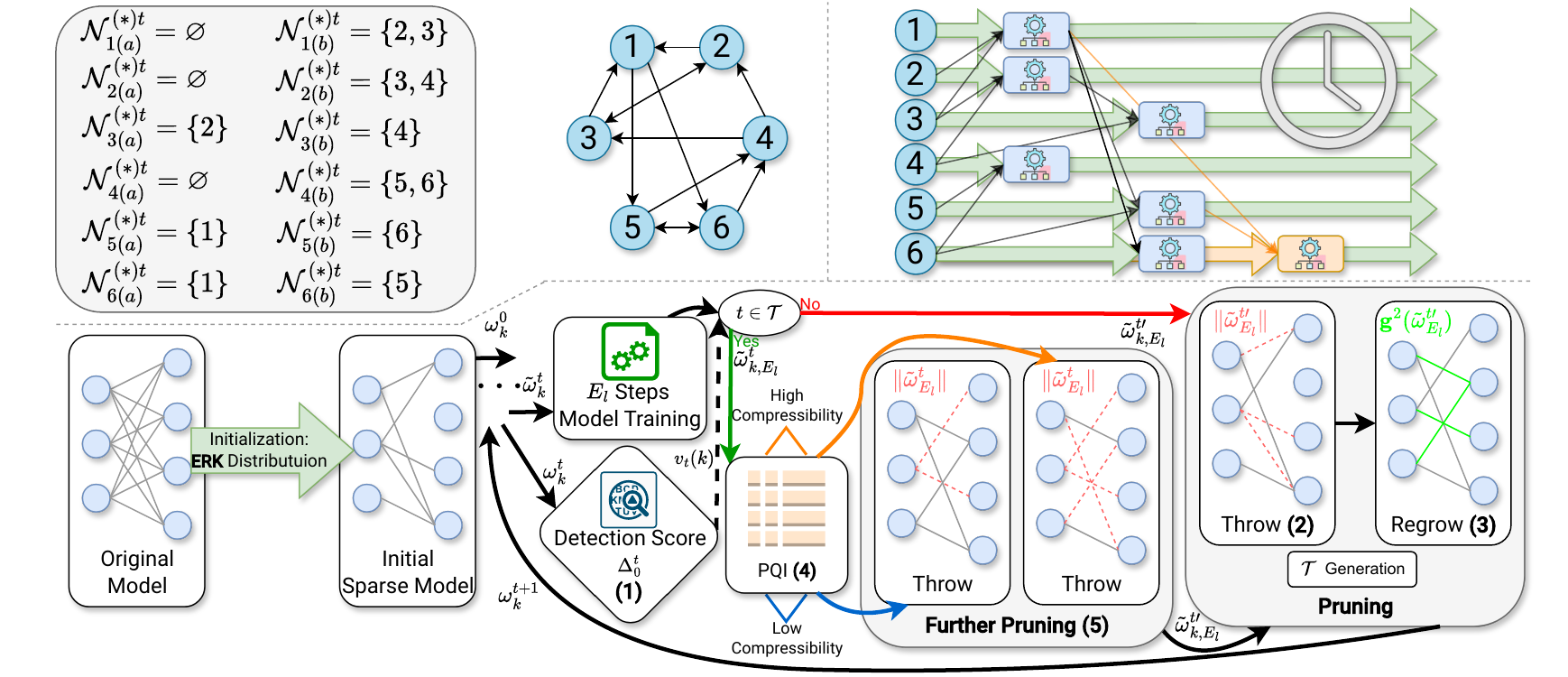}    
    \caption{\textbf{(Top)} Client network ($K=6, M=2,N=1$) with \textit{reuse indexes} $\mathcal{N}_{k(a)}^{(*)t}$ and $\mathcal{N}_{k(b)}^{(*)t}$. Learning schedule: while $N=0$, all nodes train in parallel, i.e., $\mathcal{N}_{k(a)}^{(*)t}=\emptyset$; if $N=1$, node 3 waits for 2, node 5 and 6 for 1; nodes 1, 2, 4 begin parallel training immediately; $N=2$ enables node 6 wait for $1$ and 5, marked with \textit{different} color. \textbf{(Bottom)} Training process at time $t$ for client $k$. Flow follows $t\in \mathcal{T}$: `\textcolor{red}{no}' leads to normal \textit{sparse} training, `\textcolor{green}{yes}' to proposed \textit{sparser} training. Steps: (1) Detection score calculation using $\omega_{k}^{t}$, determining $t^{*}$; (2) Magnitude-based weight pruning; (3) Gradient-flow-driven weight recovery; (4) PQI evaluation for NN compressibility; (5) Additional pruning based on compressibility level.}
    \label{fig:network}
    \end{center}
\end{figure*}

\section{Problem Fundamentals \& Preliminaries}
Consider a distributed system with $K$ clients indexed by $\mathcal{K} = \{1, 2, \ldots, K\}$. The clients are 
networked given a topology represented by a graph $\mathcal{G}(\mathcal{K}, \mathbf{V})$, where the adjacency matrix 
\( \mathbf{V}=[v_{i,j}] \in \mathbb{R}^{K\times K} \) \cite{sun2022decentralized} 
defines the neighborhood $\mathcal{G}_{k}$ of client $k \in \mathcal{K}$, i.e., subset of clients that directly communicate with  client $k$, $\mathcal{G}_{k} = \{i \in \mathcal{K}: v_{i,k} > 0\}$. 
An entry $v_{i,k} = 0$ indicates no communication from client $i$ to client $k$, i.e., $i \notin \mathcal{N}_{k}$. 
Note that \( v_{i,k}=v_{k,i}\) may not always be valid for \( i\neq k \).
The topology can be static or dynamic. 
In our case, we adopt dynamic communication among clients, i.e., entries in $\mathbf{V}^{t}$ depend on (discrete) time instance $t \in \mathbb{T} = \{1, 2, \ldots\}$.
We define a time-varying and non-symmetric network topology via \( \mathbf{V}^{t}=[v_{i,j}^{t}] \in \mathbb{R}^{K\times K} \) accommodating temporal neighborhood \( \mathcal{G}_{k}^{t} \) for $k$-th client. 
We consider a scalable DFL setting with \( K \) clients (e.g., mobile devices, IoT devices)
with a time-varying topology. 
Each client \( k \in \mathcal{K} \) possesses local data \( \mathcal{D}_{k} = \{(\mathbf{x},y)\}\) of input-output pairs $\mathbf{x} \in \mathcal{X}$ and $y \in \mathcal{Y}$, and communicates with neighbours \( \mathcal{G}_{k}^{t} \) exchanging models. 
The problem formulation of PFL (adopting the formulation in \cite{dai2022dispfl})
seeks to find the models $\mathbf{\omega}_{k}, \forall k \in \mathcal{K}$, that minimize:
\begin{eqnarray}
\label{eq:optim_PFL}
\min_{\{\mathbf{\omega}_{k}\}, k\in[1,K]} f(\{\mathbf{\omega}_{1}\}_{k=1}^{K}) = \frac{1}{K}\sum_{k=1}^{K} F_{k}(\mathbf{\omega}_{k}),
\end{eqnarray}
where $F_{k}(\mathbf{\omega}_{k}) = \mathbb{E}[\mathcal{L}(\mathbf{\omega}_{k}; (\mathbf{x},y)) | (\mathbf{x},y) \in \mathcal{D}_{k}]$ with expected loss function \( \mathcal{L}(.;.) \) between actual and predicted output given local data \( \mathcal{D}_{k} \).
We depart from (\eqref{eq:optim_PFL}) by adopting model pruning in DFL aiming at eliminating 
non-essential model weights. This is achieved by utilizing a binary mask \(\mathbf{m}\) over a model. 
Hence, the pruning-based (masked) PFL problem is succinctly formulated as finding the global model $\mathbf{\omega}$ and individual masks $\mathbf{m}_{k}, \forall k \in \mathcal{K}$:
\begin{eqnarray}
\label{eq:optim_MPFL}
\min_{\mathbf{\omega}, \{\mathbf{m}_{k}\}, k\in[1,K]} f(\{\mathbf{\omega}_{k}\}_{k=1}^{K}) = \frac{1}{K}\sum_{k=1}^{K} F_{k}(\omega\odot \mathbf{m}_{k}),
\end{eqnarray}
where $F_{k}(\omega\odot \mathbf{m}_{k}) = \mathbb{E}[\mathcal{L}(\omega\odot \mathbf{m}_{k}; (\mathbf{x},y)) | (\mathbf{x},y) \in \mathcal{D}_{k}]$; \( \odot \) represents the Hadamard product (element-wise product) of two matrices. 
The individual mask \( \mathbf{m}_{k} \) denotes a pruning operator specific to client \( k \). 
Given mask $\mathbf{m}_{k}$, the sparsity $s_{k} \in [0,1]$ of \( \mathbf{m}_{k} \) indicates the proportion of non-zero model weights among all weights. 
The goal of \eqref{eq:optim_MPFL} is to seek a global model \( \boldsymbol{\omega} \) and individual masks \( \mathbf{m}_{k} \) such that the optimized personalized model for \textit{each} client $k \in \mathcal{K}$ is given by \( \boldsymbol{\omega}_{k} =\boldsymbol{\omega}\odot \mathbf{m}_{k} \), while clients communicate at time $t$ only with their neighbors $\mathcal{G}_{k}^{t}$ given  the time-varying $\mathbf{V}^{t}$.

\section{The DA-DPFL framework}
\subsection{Overview}
We introduce the DA-DPFL framework to tackle the problem in Eq. \eqref{eq:optim_MPFL}. 
DA-DPFL not only addresses data heterogeneity efficiently via masked-based PFL but also significantly improves convergence speed by incorporating a fair dynamic communication protocol. Sequential pointing line communication adopts an one-peer-to-neighbors mechanism striking the balance between computational parallelism and delay. As described in \cite{yuan2023decentralized}, two sequential pointing DFL strategies, \textit{continual} and \textit{aggregate}, facilitate knowledge dissemination in distributed learning. 
However, certain challenges are evident as discussed in \cite{yuan2023decentralized} such as data heterogeneity and non-scalability.
Furthermore, the decision on when to apply pruning during training is crucial \cite{frankle2019stabilizing, pmlr-v162-rachwan22a}. While \textit{early} pruning reduces computational cost, it may adversely affect performance. Choosing an optimal pruning time can enhance both training and communication efficiency, often with minimal performance degradation or even improvements. As elaborated in \cite{dai2022dispfl} and \cite{hoefler2021sparsity}, a sparser model tends to have a reduced generalization bound characterized by smaller discrepancy between training and test errors. Finding sparser models with lower generalization error, DA-DPFL introduces an innovative dynamic pruning strategy. 
DA-DPFL addresses data heterogeneity while decreasing the number of communication rounds needed for convergence and achieving high model performance. This comes at a small and controllable delay in 
learning from the trained models. The processes of DA-DPFL are depicted in Fig. \ref{fig:network} and Algorithm \ref{algorithm:adpfl}. 

\begin{remark}
The relationship between model sparsity and performance links to the complexity of the task and model architecture. Pruning becomes a necessary solution when there is model redundancy for the given task, aligned with the findings in \cite{dai2022dispfl} and \cite{hoefler2021sparsity}. If the model is non-redundant, an increased sparsity invariably affects model performance. 
\end{remark}

\begin{algorithm}[tb]
\caption{The DA-DPFL Algorithm}
\label{algorithm:adpfl}
\begin{algorithmic}[1]
\STATE \textbf{Input:} $K$ clients; $T, E_{l}$ rounds; 
PQI hyper-param. $\{p,q,\gamma,\eta_{c}\}$, pruning thr $\delta_{pr}$; voting threshold $\delta_{v}$; factors $b, c$; target sparsity $s^{*}$;
\STATE \textbf{Output:} Personalized aggregated models $\{\boldsymbol{\tilde{\boldsymbol{\omega}}}_{k}^{T}\}_{k=1}^{K}$. 
\STATE \textbf{Initialization:} Initialize $\{\mathbf{m}_{k}^{0}\}_{k=1}^{K}$, $\{\boldsymbol{\omega}_{k}^{0}\}_{k=1}^{K}$, $\mathcal{T} \leftarrow \emptyset$
\FOR{round $t = 1$ \textbf{to} $T$}
    \FOR{each client $k$}
        \STATE Generate a random \textit{reuse} index set $\{\mathcal{N}^{t}_{k}\}_{k=1}^{K}$.
        \STATE Generate a random bijection $\pi^{t}_{k}$ between $\mathcal{N}^{t}_{k}$ and $\mathcal{G}^{t}_{k}$
        \STATE Form prior and posterior set $\{\mathcal{N}^{(*)t}_{k(a)},\mathcal{N}^{(*)t}_{k(b)}\}$
        \STATE Form  $\{\mathcal{G}^{(*)t}_{k(a)},\mathcal{G}^{(*)t}_{k(b)}\}$ by $\{\mathcal{N}^{(*)t}_{k(a)},\mathcal{N}^{(*)t}_{k(b)},\pi^{-1(t)}_{k}\}$

        \IF{$\mathcal{G}_{(a)k}^{(*)t}!= \emptyset$}
            \STATE \textbf{do} Wait models from neighbors $\mathcal{G}_{(a)k}^{(*)t}$
        \ENDIF
        \STATE Receive neighbor's models $\boldsymbol{\omega}_{j}^{t}, j \in \mathcal{G}_{k}^{t}$ 
        \STATE Obtain mask-based aggregated model $\boldsymbol{\Tilde{\omega}}_{k}^{t}$.
        \STATE Compute $\boldsymbol{\Tilde{\omega}}_{k,\tau}^{t}$ for $E_{l}$ local rounds.
        \STATE Calculate $\Delta_{0}^{t}(k)$ and $v_{t}(k)$ based on $\delta_{pr}$.
        \STATE Broadcast $v_{t}(k)$ to all clients; derive $t^{*}$.
        \IF{$t \in \mathcal{T}$ \textbf{and} $s_{k} < s^{*}$}
            \STATE 
            Call Algorithm \ref{algorithm:SAP} to obtain $\boldsymbol{\Tilde{\omega}}_{k,E_{l}}^{t\prime}, \mathbf{m}_{k}^{t\prime}$ 
            \STATE Update sparsity $s_{k}$
        \ELSE
            \STATE Set $(\boldsymbol{\Tilde{\omega}}_{k,E_{l}}^{t\prime}, \mathbf{m}_{k}^{t\prime}) \leftarrow (\boldsymbol{\Tilde{\omega}}_{k,E_{l}}^{t}, \mathbf{m}_{k}^{t})$
        \ENDIF
        \STATE Call Algorithm \ref{algorithm:rigl} to update $\mathbf{m}_{k}^{t+1}$ 
        \STATE Set $\boldsymbol{\omega}_{k}^{t+1}=\boldsymbol{\Tilde{\omega}}_{k,E_{l}}^{t\prime}$ 
    \ENDFOR
     \STATE \textbf{if} $t == t^{*}$ \textbf{then} Update $\mathcal{T}$.
\ENDFOR
\end{algorithmic}
\end{algorithm}

\subsection{Learning Scheduling Policy}
In this section, we outline the scheduling policy adopted for client participation within our framework. This applies to any topological connection, such as a \textit{ring} or \textit{fully-connected} network, where neighborhood sets, denoted as $\mathcal{G}_{k}^{t},k\in\{1, 2, \ldots, K\}$, are established. It is important to note that DA-DPFL is particularly suited for a time-varying connected topology while remains flexible to accommodate a static topology, represented as $\mathcal{G}_{k}$. At the start of each communication round, denoted by $t$, \textit{reuse indexes} for neighborhood sets, $\mathcal{N}_{k}^{t}$, are randomly assigned to $M$ clients, where $\lvert \mathcal{G}_{k}^{t} \rvert=\lvert \mathcal{N}_{k}^{t} \rvert = M < K$ and $\mathcal{G}_{k}^{t} \stackrel{\pi_{k}^{t}}{\leftrightarrow} \mathcal{N}_{k}^{t}$, where $\pi_{k}^{t}$ is a random bijection mapping. Given $\mathcal{G}_{k}^{t}$ and $\mathcal{N}_{k}^{t}$ are both discrete sets,
\begin{align}\label{eq:pi_definition}
    i = \pi_{k}^{t}(j),
\end{align}
with index $i \in \mathcal{N}_{k}^{t} $ and $j \in \mathcal{G}_{k}^{t}$.
It is crucial to acknowledge that $\mathcal{N}_{k}^{t}$ may be equal with $\mathcal{G}_{k}^{t}$ if sets are randomly generated. For simplicity, we let $\mathcal{N}_{k}^{t}= \mathcal{G}_{k}^{t}$ in Fig. \ref{fig:network}, where client $k$ is indexed with reuse index $k$. 
The introduction of $\mathcal{N}_{k}^{t}$ serves to emphasize the independence in the generation of \textit{reuse} indexes, which are pivotal in guiding the dynamic aggregation process.
\textbf{Note:} the criteria for establishing $\mathcal{G}_{k}^{t}$ are influenced by factors e.g., network bandwidth, geographical location, link availability; however, $\mathcal{N}_{k}^{t}$ is independent of these factors. We reassign client indices for each training round.



Within DA-DPFL, a client $k$ may defer the reception of models from \textit{some} neighbors, contingent upon $\mathcal{N}_{k}^{t}$. We denote $\mathcal{N}_{k}^{t}$ for each client $k$ into two subsets based on the reuse indices of neighboring clients: 
(a) a \textit{prior} client subset $\mathcal{N}_{(a)k}^{t} = \{n_{k}^{t} \leq k : n_{k}^{t} \in \mathcal{N}_{k}^{t}\}$, and 
(b) a \textit{posterior} client subset $\mathcal{N}_{(b)k}^{t} = \{n_{k}^{t} > k : n_{k}^{t} \in \mathcal{N}_{k}^{t}\}$ (refer to Fig. \ref{fig:network} Top). Should $\mathcal{N}_{(b)k}^{t} = \emptyset$, implying $\mathcal{N}_{(a)k}^{t} = \mathcal{N}_{k}^{t}$, client $k$ awaits the \textbf{slowest} client within $\mathcal{N}_{k}^{t}$ before commencing model aggregation and local dataset training $\mathcal{D}_{k}$. To enhance scalability, we introduce a threshold to allow waiting for, at most, $N$ fastest clients in $\mathcal{N}_{(a)k}^{t}$, where $\lvert \mathcal{N}_{(a)k}^{(*)t}\rvert =N\leq M$. Then, $\mathcal{N}_{(a)k}^{(*)t}\cup \mathcal{N}_{(b)k}^{(*)t}=\mathcal{N}_{k}^{t}$. Conversely, absence of a prior client set ($\mathcal{N}_{(a)k}^{(*)t} = \emptyset$) enables client $k$ to incorporate models from $\mathcal{N}_{(b)k}^{(*)t}$ \textit{without} delay, as illustrated by nodes 1, 2, and 4 in Fig. \ref{fig:network}. Based on the bijection mapping between $\mathcal{N}_{k}^{t}$ and $\mathcal{G}_{k}^{t}$, $\mathcal{G}_{(a)k}^{(*)t}$ is obtained. 
Therefore, DA-DPFL achieves a hybrid scheme between continual learning with delayed aggregation and immediate aggregation, i.e., dynamic aggregation. Continual learning is achieved by gradual learning of the models from clients in client $k$'s prior set. The benefit obtained is the sequential knowledge transfer from clients in the prior set. 
This comes at the expense of a potential delay to client $k$ for aggregating the models from $\mathcal{G}^{(*)t}_{k(a)}$. On the other hand, the models of the clients from the posterior set are independently sent to client $k$, without any delay achieving training parallelism.

\begin{remark}
    DA-DPFL learning schedule diverges from traditional FL paradigms.
    In our example, at time $t$, nodes $\{1,2,4\}$ engage in simultaneous (parallel) training, while nodes $\{3,5,6\}$ await model reuse from preceding clients. This methodology 
    allows subsequent nodes to train concurrently with preceding ones, as shown by nodes $\{5,6\}$ training in tandem with node $3$. 
    The introduction of a cutoff value $N$ endows our waiting policy with controllability. 
    If $N=0$, DA-DPFL operates as a parallel FL system with sparse training; while $N=M=K$ transits DA-DPFL to a \textit{sequential} FL. 
\end{remark}

\subsection{Time-optimized Dynamic Pruning Policy}
Alongside scheduling of local training and gradual model aggregation achieved by prior and posterior neighbors per client, we introduce a dynamic pruning policy. The initial mask $\mathbf{m}_{k}^{0}, \forall k$, is set up in accordance with the Erdos-Renyi Kernel (ERK) distribution \cite{evci2020rigging}. Subsequently, masks are removed and re-grown based on the importance scores, which are computed from the magnitude of model weights and gradients. This strategy is an extension of the centralized RigL \cite{evci2020rigging} to DA-DPFL as elaborated in Appendix \ref{sec:rigl} (line:24) in Algorithm \ref{algorithm:adpfl}. We devise a method that is orthogonal to other fixed-sparsity training methods like 
RigL facilitating \textit{further} pruning.
The Sparsity-informed Adaptive Pruning (SAP) in \cite{diao2022pruning} introduces the PQ Index (PQI) to 
assess the potential `compressibility' of a DNN (line:19; Appendix \ref{sec:appendix_SAP}). 
DA-DPFL leverages PQI by integrating within DFL, which addresses the heterogeneity of various local models by adaptively pruning different models. 
In centralized learning, EarlyCrop's analysis \cite{pmlr-v162-rachwan22a} on pruning scheduling relies on sufficient information during \textit{critical learning periods}, while CriticalFL \cite{yan2023criticalfl} advocates for an early doubling of information transmission. EarlyCrop leverages between \textit{gradient flow} and \textit{neural tangent kernel} to facilitate seamless transition into model pruning. We adjust the pruning time detection score as:
\begin{equation}\label{eq:pruning_time_1}
    \frac{\lvert \Delta_{0}^{t}-\Delta_{0}^{t-1}\rvert}{\lvert\Delta_{0}^{1}\rvert}<\delta_{pr}; \Delta_{0}^{t}:=\|\boldsymbol{\omega}^{t}-\boldsymbol{\omega}^{0}\|^{2},  
\end{equation}
where \(\delta_{pr}\) is a predefined threshold. 
In DA-DPFL with \(K\) clients, we introduce a \textit{voting majority rule}, where the client \(k\)'s vote is defined as:
\begin{equation}
v_{t}(k) = 
\begin{cases} 
1 & \text{if } \frac{\lvert \Delta_{0}^{t}(k)-\Delta_{0}^{t-1}(k)\rvert}{\lvert\Delta_{0}^{1}(k)\rvert}<\delta_{pr}, \\
0 & \text{otherwise.}
\end{cases}
\end{equation}
Hence, the first time to prune is determined by $K$ clients as: 
    $t^{*}=\min\{ t : \frac{1}{K}\sum_{k=1}^{K}v_{t}(k) < \delta_{v}\}$
where \(\delta_{v}\) represents the ratio threshold for voting. Given the first pruning time $t^{*}$, DA-DPFL determines the frequency of pruning for rounds $t>t^{*}$. Based on the influence of early training phase, a.k.a. \textit{critical learning} period \cite{jastrzebski2021catastrophic} on the local curvature of the loss function in DNNs, our strategy permits a low pruning frequency during the initial stages which intensifies pruning as model approaches convergence. This balance between communication overhead and model performance yields an optimal pruning frequency that varies across tasks and model architectures. We define the \textit{pruning frequency}, i.e., the gap between consecutive pruning events, and in turn the pruning times by non-evenly dividing the rest of the horizon $T-t^{*}$:
\begin{equation}
\label{eq:pruing_time_3}
    I_{\tau}:= \lceil \frac{t^{*}+b}{c^{\tau-1}} \rceil, \tau \in \{\mathbb{Z}_{\geq1}\}.
\end{equation} 
Parameter $b > 0$ delays the optimal first pruning time, $c>0$ is a scaling factor to adjust pruning frequency. The $p$-th pruning time $t_{p}$ with $t_{p} > t^{*}$, is $t_{p}=\sum_{\tau=1}^{p}I_{\tau}$ obtaining the pruning times set $\mathcal{T} = \{t_{1}, \ldots, t_{p}: t^{*} < t_{p} < T\}$. 

\subsection{Masked-based Model Aggregation}
For notation compatibility, following the model aggregation operator in DisPFL\cite{dai2022dispfl} and FedDST \cite{bibikar2022federated}, the client $k$'s aggregated model $\boldsymbol{\Tilde{\omega}}_{k}^{t}$ derived from the models of client $k$'s neighbors in $\mathcal{G}^{t}_{k}$ at round $t$ based on masked local model is:
\begin{equation}
\label{eq:mask_weight_aggr}
    \boldsymbol{\Tilde{\omega}}_{k}^{t}=\Big(\frac{\sum_{j\in \mathcal{G}_{k+}^{t}\boldsymbol{\omega}_{j}^{t}}}{\sum_{j\in \mathcal{G}_{k+}^{t}\mathbf{m}_{j}^{t}}}\Big)\odot \mathbf{m}_{k}^{t},
\end{equation}
where $\mathcal{G}_{k+}^{t} = \mathcal{G}_{k}^{t} \cup \{k\}$ is client $k$'s neighborhood including client $k$. The local training rounds $\tau \in E_{l}$ based on the obtained $\boldsymbol{\Tilde{\omega}}_{k}^{t}$ is:
$\boldsymbol{\Tilde{\omega}}_{k,\tau+1}^{t}=\boldsymbol{\Tilde{\omega}}_{k,\tau}^{t}-\eta (\mathbf{g}_{k,\tau}^{t}\odot\mathbf{\mathbf{m}_{k}^{t}})$,
where $\mathbf{g}_{k,\tau}^{t}$ is the gradient of local loss function $F_{k}(\cdot)$ w.r.t. $\boldsymbol{\Tilde{\omega}}_{k,\tau}^{t}$.

\section{Theoretical Analysis}
\begin{assumption}
\label{assumption:mu_lips}
    \textbf{$\mu$-Lipschitz-continuity:} $\forall \boldsymbol{\omega}_{1}, \boldsymbol{\omega}_{2} \in \mathbb{R}^d$, $\forall k \in \lvert K \rvert, \mu \in \mathbb{R}:
        \|\nabla f_{k}(\boldsymbol{\omega}_{1}) - \nabla f_{k}(\boldsymbol{\omega}_{2})\| \leq \mu\|\boldsymbol{\omega}_{1} - \boldsymbol{\omega}_{2}\|$.
\end{assumption}

\begin{assumption}\label{assumption:bounded_var}
    \textbf{Bounded variance for gradients:} \cite{sun2022decentralized} $\forall k \in \lvert K \rvert$ and  $\boldsymbol{\omega}\in \mathbb{R}^{d}$:
    \begin{align}
                \mathbb{E}[\|\nabla \hat{f}_{k}(\boldsymbol{\omega})-\nabla f_{k}(\boldsymbol{\omega})\|^2]\leq \sigma^{2}_{l},\\
                \frac{1}{K}\sum_{k=1}^{K}\|\nabla f_{k}(\boldsymbol{\omega})-\nabla f(\boldsymbol{\omega})\|^{2} \leq \sigma^{2}_{g},\\
                \frac{1}{K}\sum_{k=1}^{K}\|\nabla \tilde{f}_{k}(\boldsymbol{\omega\odot \mathbf{m}_{k}})-\nabla f(\boldsymbol{\omega})\|^{2} \leq \sigma^{2}_{p},
    \end{align}
    $\hat{f}(\cdot)$ is the estimated gradients from training data; $\tilde{f}(\cdot)$ is personalized global gradients. 
\end{assumption}
\begin{assumption}\label{assumption:mask_influence}
    The aggregated model \(\Tilde{\boldsymbol{\omega}}_{k}^{t}\) for client \(k\) at iteration \(t\) is given by:
    \begin{align}
        \Tilde{\boldsymbol{\omega}}_{k}^{t} &= \left(\frac{\sum_{j\in \mathcal{G}_{k}^{t}}\boldsymbol{\omega}_{j}^{t}}{\sum_{j\in \mathcal{G}_{k}^{t}}\mathbf{m}_{j}^{t}}\right)\odot \mathbf{m}_{k}^{t} = \left(\frac{\sum_{j\in \mathcal{G}_{k}^{t}}\boldsymbol{\omega}_{j}^{t}}{M}\right)\odot \mathbf{m}_{k}^{t}
    \end{align}
    where \(\mathcal{G}_{k}^{t}\) is neighborhood of client \(k\) with size \(|\mathcal{G}_{k}^{t}| = M\); all local models are sparse, i.e., \(\boldsymbol{\omega}_{j}^{t} = \boldsymbol{\omega}_{j}^{t} \odot \mathbf{m}_{j}^{t}\). 
\end{assumption}

\begin{proposition}\label{proposition1}
   Assume $\mathcal{K} = \{1, 2, \ldots, K\}$ clients,
   then, exactly $m$ neighbors in $\mathcal{N}_{k}^{t}$ have reuse index less than $k$ follows a hypergeometric distribution with \begin{equation}
        \mathbb{P}(m, k) = \mathbb{P}(|\mathcal{N}_{(a)k}| = m) = \frac{\binom{k-1}{m} \binom{K - k}{M - m}}{\binom{K - 1}{M}},
    \end{equation}  
    where $m<M$ and $|\mathcal{N}_{(a)k}|$ is subset of $\mathcal{N}_{k}^{t}$ with index less than $k$.
\end{proposition}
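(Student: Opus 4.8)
The plan is to recognize this as a classical urn-model computation. We have $K$ clients, and the \emph{reuse index} assignment is effectively a uniformly random relabeling of clients: client $k$ receives a reuse index, and its neighborhood $\mathcal{N}_k^t$ consists of $M$ other reuse indices drawn from the remaining $K-1$. The event $\{|\mathcal{N}_{(a)k}| = m\}$ is precisely the event that exactly $m$ of these $M$ drawn indices fall among the $k-1$ indices strictly smaller than $k$, while the remaining $M-m$ fall among the $K-k$ indices strictly larger than $k$.

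First I would set up the sample space explicitly: conditioned on client $k$ being assigned reuse index $k$ (or, by the symmetry of the random bijection $\pi_k^t$ and the independent uniform generation of reuse indexes, conditioned on the \emph{rank} of client $k$'s reuse index being the $k$-th smallest), the set $\mathcal{N}_k^t$ is a uniformly random $M$-subset of the other $K-1$ reuse indices. Among those $K-1$ indices, exactly $k-1$ are "prior" (smaller than $k$) and $K-k$ are "posterior" (larger than $k$). Second, I would count favorable outcomes: the number of $M$-subsets with exactly $m$ prior elements is $\binom{k-1}{m}\binom{K-k}{M-m}$, choosing which $m$ of the $k-1$ prior indices and which $M-m$ of the $K-k$ posterior indices are included. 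Dividing by the total number of $M$-subsets, $\binom{K-1}{M}$, gives the stated formula, which is exactly the probability mass function of a hypergeometric distribution with population size $K-1$, $k-1$ "successes," and $M$ draws.

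The one point requiring care — and the main (mild) obstacle — is justifying that the conditioning on client $k$'s own reuse index is harmless, i.e., that the distribution of $|\mathcal{N}_{(a)k}|$ does not depend on which reuse index $k$ happened to draw beyond its rank, and that $\mathcal{N}_k^t$ is genuinely uniform over $M$-subsets of the complement. This follows from the construction in Section IV: reuse indexes are assigned by a relabeling independent of the bandwidth/topology factors that shape $\mathcal{G}_k^t$, and $\mathcal{N}_k^t$ is generated uniformly at random, so the bijection $\pi_k^t$ merely transports a uniform object. I would also note the natural support constraint $\max(0, M-(K-k)) \le m \le \min(M, k-1)$, under which the binomial coefficients vanish appropriately outside this range, so the formula holds for all $m$ with the usual convention $\binom{a}{b}=0$ for $b>a$ or $b<0$; the paper's stated condition $m<M$ is then a (slightly loose) sufficient condition for nondegeneracy. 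Finally I would remark that summing over $m$ gives $1$ by the Vandermonde identity $\sum_m \binom{k-1}{m}\binom{K-k}{M-m} = \binom{K-1}{M}$, confirming it is a valid distribution.
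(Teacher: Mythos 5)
Your proof is correct and follows essentially the same route as the paper's: both treat $\mathcal{N}_k^t$ as a uniformly random $M$-subset of the remaining $K-1$ reuse indices, of which $k-1$ are ``prior,'' count favorable subsets as $\binom{k-1}{m}\binom{K-k}{M-m}$ over $\binom{K-1}{M}$, and invoke Vandermonde's identity for normalization. Your added remarks on the support constraint and the harmlessness of conditioning on client $k$'s own rank are sensible refinements but do not change the argument.
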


\begin{proof}
    See Appendix \ref{appendix:client_hypergeometric}
\end{proof}
$\Tilde{\boldsymbol{\omega}}_{k}^{t}$ denotes the local personalized aggregated model for the $k$-th client at time $t$; The global aggregated model at time $t$, $\Tilde{\boldsymbol{\omega}}^{t}$, is defined as the average of the local aggregated models, i.e., $\Tilde{\boldsymbol{\omega}}^{t} = \frac{1}{K}\sum_{k=1}^{K}\Tilde{\boldsymbol{\omega}}_{k}^{t}$, where $K$ is the total number of clients. Let $M$ represent the number of clients in the neighborhood. All models are under the setup of DA-DPFL, where (1) the models are sparse; and (2) a new scheduling strategy is adopted. Then, we obtain the following theorem.

\begin{theorem}\label{theorem:1}
Under Assumptions~\ref{assumption:mu_lips}~to~\ref{assumption:mask_influence}, when $T$ is sufficiently large and the stepsize $\eta$ for SGD for training client models satisfies $\eta \leq \sqrt{\frac{1}{12\mu^{2}(M-1)(2M-1)}}$ for $M>1$,
\begin{align}
    &\min{\mathbb{E}\|\nabla f(\Tilde{\boldsymbol{\omega}}^{t})\|^{2}} \notag \\
    &\leq \frac{2}{T\left(\eta-6S_{1}(\mu-\eta)\right)} \left( \mathbb{E}[f(\Tilde{\boldsymbol{\omega}}^{0})] - \min{f} \right) + S_{3},
\end{align}
where $S_{1}=2\eta^{2}M(M-1)\left( \exp{\left(\frac{(3M+2)E_{l}}{4(M^{2}-1)}\right)} - 1 \right)$, $S_{2}=\frac{1}{2M-1}\sigma^{2}_{l} + 3(2\sigma^{2}_{g}+\sigma^{2}_{p})$
, and $S_{3}=\frac{2}{\eta-6S_{1}(\mu-\eta)} \cdot \left[ (\mu-\eta) S_{1}S_{2} +\frac{3\mu^{2}\eta^{3}(3M+2)E_{l}}{2(M+1)M}(\sigma_{l}^{2}+\sigma_{g}^{2})\right]$. 
$f(\Tilde{\boldsymbol{\omega}}^{0})$ represents initial global model loss, $\min{f}$ is minimum of loss, $M$ is neighborhood size.
\end{theorem}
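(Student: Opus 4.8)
The plan is to follow the standard non-convex decentralized-SGD template, adapted to the masked aggregation of Assumption~\ref{assumption:mask_influence} and to the dynamic-reuse schedule. I would start from the $\mu$-smoothness descent inequality applied to the averaged (global) aggregated iterate, $f(\tilde{\boldsymbol{\omega}}^{t+1}) \le f(\tilde{\boldsymbol{\omega}}^{t}) + \langle \nabla f(\tilde{\boldsymbol{\omega}}^{t}),\, \tilde{\boldsymbol{\omega}}^{t+1}-\tilde{\boldsymbol{\omega}}^{t}\rangle + \tfrac{\mu}{2}\|\tilde{\boldsymbol{\omega}}^{t+1}-\tilde{\boldsymbol{\omega}}^{t}\|^{2}$, and unrolling the one-round update as $\tilde{\boldsymbol{\omega}}^{t+1}-\tilde{\boldsymbol{\omega}}^{t} = -\tfrac{\eta}{K}\sum_{k}\sum_{\tau=0}^{E_{l}-1}\mathbf{g}_{k,\tau}^{t}\odot\mathbf{m}_{k}^{t}$, with the intermittent SAP pruning absorbed into the operative mask $\mathbf{m}_{k}^{t}$ for that round (so it only contributes through $\sigma_{p}^{2}$ via Assumption~\ref{assumption:bounded_var}). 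Taking conditional expectation, the cross term splits into a descent part proportional to $-\eta E_{l}\|\nabla f(\tilde{\boldsymbol{\omega}}^{t})\|^{2}$ and perturbation terms controlled by (a) the mismatch between $\nabla f$ evaluated at the local iterates $\tilde{\boldsymbol{\omega}}_{k,\tau}^{t}$ versus at $\tilde{\boldsymbol{\omega}}^{t}$, and (b) the masking/personalization bias ($\sigma_{p}^{2}$) and heterogeneity ($\sigma_{g}^{2}$, $\sigma_{l}^{2}$).

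The crux is to bound the two drift quantities $\mathbb{E}\|\tilde{\boldsymbol{\omega}}_{k,\tau}^{t}-\tilde{\boldsymbol{\omega}}_{k}^{t}\|^{2}$ (local SGD drift over $E_{l}$ steps) and $\mathbb{E}\|\tilde{\boldsymbol{\omega}}_{k}^{t}-\tilde{\boldsymbol{\omega}}^{t}\|^{2}$ (consensus drift from the size-$M$ neighborhood averaging under reuse). I would set up a one-step recursion in $\tau$: using $\mu$-smoothness and bounded variance, $\mathbb{E}\|\tilde{\boldsymbol{\omega}}_{k,\tau+1}^{t}-\tilde{\boldsymbol{\omega}}_{k}^{t}\|^{2} \le (1+\alpha)\,\mathbb{E}\|\tilde{\boldsymbol{\omega}}_{k,\tau}^{t}-\tilde{\boldsymbol{\omega}}_{k}^{t}\|^{2} + (\text{const})\cdot\eta^{2}$, where $\alpha$ depends on $\mu$, $\eta$, and on the $M$-dependent constants produced when the dynamic-reuse averaging is expanded; summing the geometric series over $\tau\le E_{l}$ and using $(1+\alpha)^{E_{l}}-1\le \exp(\alpha E_{l})-1$ yields exactly the factor $\exp\!\big(\tfrac{(3M+2)E_{l}}{4(M^{2}-1)}\big)-1$ inside $S_{1}$. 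Proposition~\ref{proposition1} enters here: averaging over the random bijection and reuse indices, $|\mathcal{N}_{(a)k}|$ obeys the stated hypergeometric law, and taking expectation over it produces the $M$-polynomial coefficients — the $(M-1)(2M-1)$ in the stepsize condition and the $\tfrac{1}{2M-1}$ attenuation of $\sigma_{l}^{2}$ in $S_{2}$ (intuitively, reuse roughly doubles the number of gradients averaged per round).

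I would then assemble a per-round inequality of the form $\mathbb{E}[f(\tilde{\boldsymbol{\omega}}^{t+1})] \le \mathbb{E}[f(\tilde{\boldsymbol{\omega}}^{t})] - \tfrac{1}{2}\big(\eta - 6 S_{1}(\mu-\eta)\big)\,\mathbb{E}\|\nabla f(\tilde{\boldsymbol{\omega}}^{t})\|^{2} + (\mu-\eta)S_{1}S_{2} + \tfrac{3\mu^{2}\eta^{3}(3M+2)E_{l}}{2(M+1)M}(\sigma_{l}^{2}+\sigma_{g}^{2})$, where the quadratic term $\tfrac{\mu}{2}\|\tilde{\boldsymbol{\omega}}^{t+1}-\tilde{\boldsymbol{\omega}}^{t}\|^{2}$ has been folded in; the stepsize bound $\eta \le \sqrt{1/(12\mu^{2}(M-1)(2M-1))}$ is precisely what forces the descent coefficient $\eta - 6S_{1}(\mu-\eta)$ to be strictly positive (e.g.\ $6S_{1}(\mu-\eta)\le\eta/2$). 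Telescoping over the $T$ rounds, dividing by $T$, using $\min_{t}\mathbb{E}\|\nabla f(\tilde{\boldsymbol{\omega}}^{t})\|^{2} \le \tfrac{1}{T}\sum_{t}\mathbb{E}\|\nabla f(\tilde{\boldsymbol{\omega}}^{t})\|^{2}$, and bounding $\mathbb{E}[f(\tilde{\boldsymbol{\omega}}^{T})]$ below by $\min f$, gives the claimed bound, with $S_{3}$ collecting the two constant error terms scaled by $\tfrac{2}{\eta - 6S_{1}(\mu-\eta)}$.

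I expect the main obstacle to be the \emph{simultaneous} control of local drift and the reuse-induced coupling: unlike vanilla D-SGD, the aggregated model a client consumes may already have been updated within the current round, so the recursion must carry the schedule-averaged second moments jointly across clients, and matching the exact constants ($\tfrac{3M+2}{4(M^{2}-1)}$ in the exponential, $(M-1)(2M-1)$ in the stepsize, $\tfrac{1}{2M-1}$ in $S_{2}$) demands careful bookkeeping of the hypergeometric expectation combined with the $E_{l}$-step unrolling, rather than any single deep idea. A secondary subtlety is checking that the pruning steps active only for $t\in\mathcal{T}$ do not break the descent recursion — handled by treating the post-pruning mask as operative and invoking Assumption~\ref{assumption:mask_influence}, so pruning enters the bound only through $\sigma_{p}^{2}$.
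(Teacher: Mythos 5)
Your outline matches the paper's proof route: a $\mu$-smoothness descent step on the averaged aggregated iterate, a per-$\tau$ recursion on the local drift summed as a geometric series to obtain the $\exp(\cdot)-1$ factor (the paper's Lemma~\ref{lemma2:boundary_local_updates}), a hypergeometric analysis of the reuse schedule via Proposition~\ref{proposition1}, and a telescoping/min-over-average finish, so in substance you have reproduced the argument. One correction on the provenance of constants: the $(M-1)(2M-1)$ in the stepsize condition and the $\tfrac{1}{2M-1}$ attenuation of $\sigma_l^2$ in $S_2$ do \emph{not} come from the hypergeometric expectation --- they come from splitting the one-step local-update error with Young's (Cauchy's) inequality using the elastic parameter $2M$, i.e.\ $\|a+b\|^2\le(1+\tfrac{1}{2M-1})\|a\|^2+2M\|b\|^2$, which also fixes the $1+\tfrac{1}{2M-2}$ growth rate of the recursion. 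The hypergeometric law enters only through the paper's Lemma~\ref{lemma5}: $\mathbb{E}|\mathcal{N}^{t}_{(a)k}|=\tfrac{(k-1)M}{K-1}$, averaged over $k$, shows the reuse schedule is equivalent in expectation to inflating the local epoch count from $E_l$ to $E_l^{*}=\tfrac{3M+2}{2(M+1)}E_l$, and it is substituting $E_l^{*}$ into the drift bound that produces the exponent $\tfrac{(3M+2)E_l}{4(M^2-1)}=\tfrac{E_l^{*}}{2(M-1)}$ in $S_1$; without this effective-epoch device, the ``schedule-averaged second moments'' you allude to would be difficult to close into the stated constants.
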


\begin{proof}
See Appendix \ref{appendix_theorem1}
\end{proof}
\begin{remark}
Theorem \ref{theorem:1} reveals that with sufficiently large $T$, the error due to initial model loss and bounded variance for gradients become negligible. Specifically, if one can choose $\eta=\mathcal{O}(\frac{1}{\mu\sqrt{T}})$, the convergence boundary will be dominated by the rate of $\mathcal{O}\left(\frac{1}{\sqrt{T}}+\frac{\sigma^{2}_{l}+\sigma^{2}_{g}+\sigma^{2}_{p}}{\sqrt{T}}+\frac{\sigma^{2}_{l}+\sigma^{2}_{g}}{T}\right)$.
\end{remark}

\begin{remark}
Theorem \ref{theorem:1} is consistent with two key empirical observations: (i) The number of communication rounds required to attain a specified error level $\varepsilon$ is lower compared to the DisPFL model. \textbf{Note:} This efficiency gain is attributed to the term $S_{1}>\left( e^{\frac{E_{l}}{(2M-2)}} - 1 \right)$, i.e., no scheduling involved, which emerges from our scheduling strategy. The division of the left-hand side (first and third item) of the inequality by $S_{1}$ results in a reduced error boundary. (ii) Changed ratio is $\frac{3M+2}{2M+2}$. When $M=2$, the ratio  simplifies to $\frac{4}{3}$. As $M$ increases, this ratio approaches $\frac{3}{2}$. This indicates that while increasing $M$ enhances the error-bound reduction, the improvement rate diminishes, suggesting a limit to the benefits offered DA-DPFL scheduling.
\end{remark}

\begin{figure*}[ht!]
    \centering
    \includegraphics[width=\textwidth]{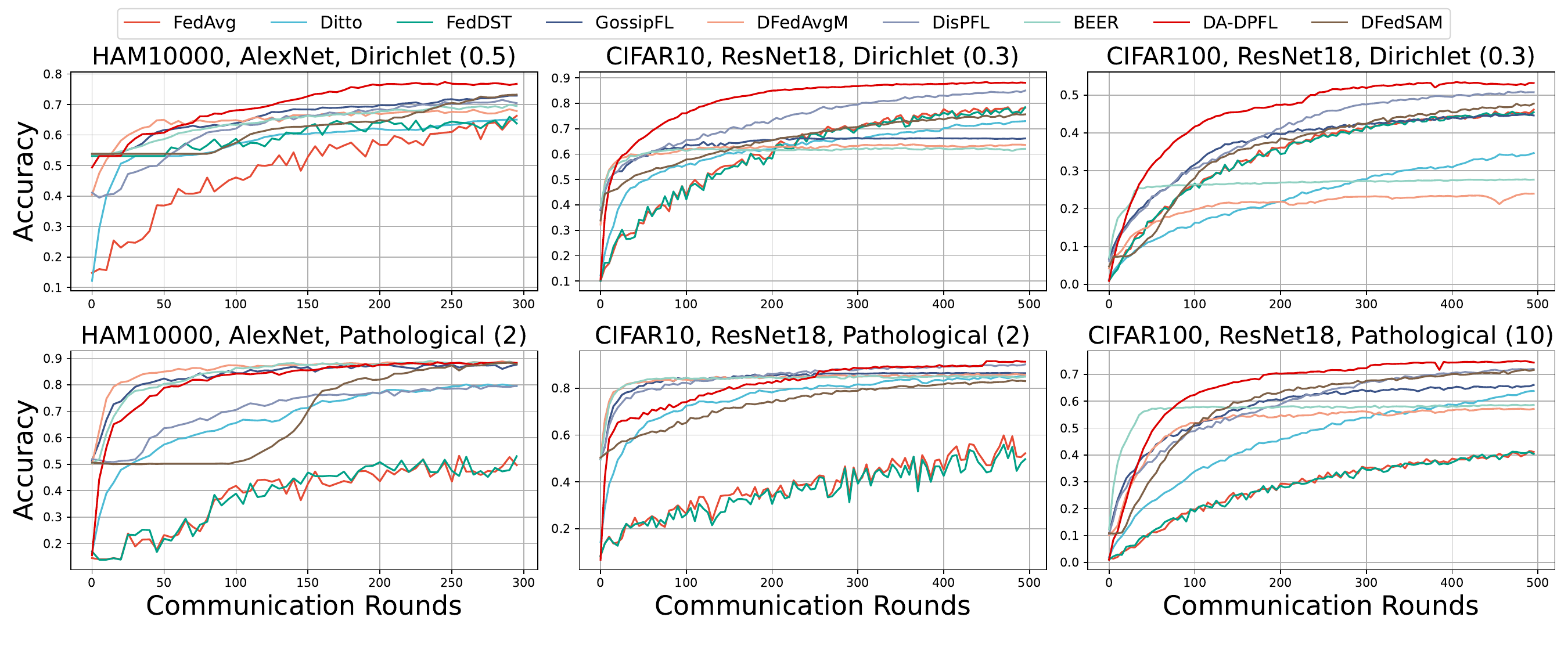}
    \caption{Test (\textit{top-1}) accuracy of all baselines, including CFLs and DFLs, across various model architectures and datasets.}
    \label{figure:baselines_comparison}
\end{figure*}
\section{Experiments}
\subsection{Experimental Setup}
\label{subsec:experiment}
\subsubsection{Datasets \& Models} Our experiments were conducted on three widely-used datasets: HAM10000 \cite{tschandl2018ham10000}, CIFAR10, and CIFAR100 \cite{krizhevsky2009learning}. We employed two distinct partition methods, \textbf{Pathological} and \textbf{Dirichlet}, to generate non-i.i.d. scenarios paralleling the approach in \cite{dai2022dispfl}. We use \textit{Dir} for Dirichlet and \textit{Pat} for Pathological in the following notations. The Dir. partition constructs non-i.i.d. data using a Dir($\alpha$) distribution, with $\alpha = 0.3$ for CIFAR10 and CIFAR100, and $\alpha = 0.5$ for HAM10000. For Pat. partitioning, several classes $n_{cls}$ are assigned per client: $2$ for CIFAR10 and HAM10000, and $10$ for CIFAR100.
To validate the versatility of our pruning methods across various model architectures, we selected AlexNet \cite{krizhevsky2012imagenet} for HAM10000, ResNet18 \cite{he2016deep} for CIFAR10, and VGG11 \cite{vggsimonyan2015very} for CIFAR100, ensuring a comprehensive evaluation across diverse model structures.

\subsubsection{Baselines} We compare the proposed methods with baselines including CFL: \textbf{FedAvg}\cite{McMahan2017}, \textbf{Ditto}\cite{Li2021a} and \textbf{FedDST} \cite{bibikar2022federated}, and DFL: \textbf{GossipFL}\cite{tang2022gossipfl}, \textbf{DFedAvgM}\cite{sun2022decentralized}, \textbf{DisPFL}\cite{dai2022dispfl}, \textbf{BEER}\cite{zhao2022beer} and \textbf{DFedSAM}\cite{shi2023improving}. 

\subsubsection{System Configuration} We consider a network of $K=100$ clients and select $M=N=10$ clients (neighbors) per communication round. CFL focuses on the communication between central server and selected clients. DFL mirrors this communication allocating identical bandwidth to each of the busiest clients. This ensures that $10$ clients are active per round matching server's connection load. 
All the baseline results are average values for three random seeds of best test model performance. In contrast to CFL, all DFL baselines except DFedSAM are configured with half the communication cost. Our method, FedDST, and DisPFL implement sparse model training for efficiency, which all start with initial sparsity $s_{k}^{0}=0.5$ with $k\in [K]$ for all clients. To ensure a balanced and fair comparison, all DFL benchmarks incorporate personalization by monitoring model performance following local training under randomly time-varying connection in \cite{dai2022dispfl}.

\subsubsection{Hyperparameters} To ensure a fair comparison, we align our experimental hyperparameters with the setups described in \cite{dai2022dispfl} and \cite{shi2023improving}.  Unless otherwise specified, we fix the number of local epochs at $5$ for all approaches and employ a Stochastic Gradient Descent (SGD) optimizer with a weight decay set to $5 \times 10^{-4}$. The learning rate is initialized at $0.1$, undergoing an exponential decay with a factor of $0.998$ after each global communication round. The batch size is consistently set to $128$ across all experiments. The global communication rounds are conducted 500 times for the CIFAR10 and CIFAR100 datasets, and 300 times for the HAM10000 dataset. We let $\delta_{v}=0.5,b=0,c=1.3$, $\{p,q,\gamma,\eta_{c}\}=\{0.5,1,0.9,1\}$ as suggested by \cite{diao2022pruning}, and  $\delta_{pr} \in \{0.01, 0.02,0.03\}$ for all experiments. In the CFL baseline implementation, the local training for Ditto is bifurcated into two distinct phases: a global model training phase spanning 3 epochs and a personalized model training phase consisting of 2 epochs. Additionally, the update mask reconfiguration interval in FedDST is determined through a grid search within the set \([1, 5, 10, 20]\). In our DFL setup, when algorithms incorporate compression techniques, we manage to reduce half of the busiest communication load. In contrast, GossipFL utilizes a \textit{Random Match} approach, which entails randomly clustering clients into specific groups. For the optimization algorithm, Momentum SGD is adopted in DFedAvgM and DFedSAM, with a momentum factor \(\beta = 0.9\). Additionally, a \(\rho\) value for DFedSAM is decided by grid search from $[0.01, 0.02, 0.05, 0.1, 0.2, 0.5]$, following the work for SAM optimizer.

\begin{figure}[htbp]
    \begin{center}
        \includegraphics[width=0.35\textwidth]{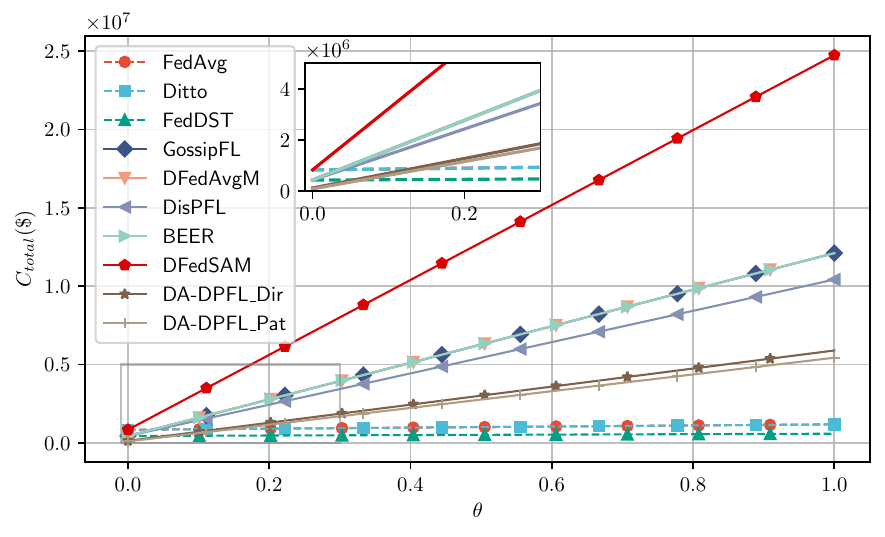}    
    \caption{Total cost (energy and time cost, in USD) of DA-DPFL and all baselines evaluated on CIFAR10 against $\theta$.}
    \label{fig:total_cost}
    \end{center}
\end{figure}

\begin{figure}[htbp]
    \begin{center}
        \includegraphics[width=0.35\textwidth]{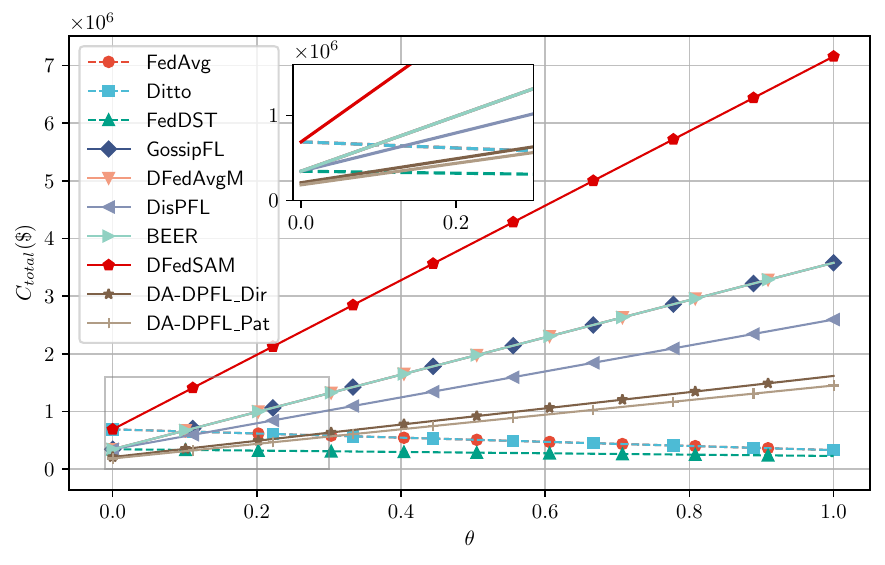}    
    \caption{Total cost (energy and time cost, in USD) of DA-DPFL and all baselines evaluated on CIFAR100 against $\theta$.}
    \label{fig:total_cost_cifar100}
    \end{center}
\end{figure}


\subsubsection{Cost Simulation}
\label{appendix:C2}
For cost analysis, we utilized an NVIDIA 4090 GPU with 80 TFLOPS and 450 W TDP as a standard to assess clients' computational power and energy consumption. The architecture anticipates 1 Gbps bandwidth, with client network cards utilizing 1 W, cited from \cite{feeney2001investigating}. We derived $C_{\text{time}}$ from Table \ref{table:comms_flops_comparison} and converted $C_{\text{energy}}$ and $C_{\text{time}}$ to monetary units using $(1-\theta)\$/s$ and $\theta \$/J$, illustrated in Fig. \ref{fig:total_cost}. To address the gap between theoretical and actual GPU execution times, we performed real-world algorithm executions on the GPU. These revealed a fivefold increase over theoretical times, leading to a correction factor of 5 for computation time, calculated as $T_{\text{comp}} = 5 \times \frac{D_{\text{FLOP}}}{V_{\text{FLOPS}}}$, and energy, $C_{\text{comp}} = T_{\text{comp}} \times P_{\text{comp}}$. Similarly, in estimating the communication time $T_{\text{comm}}$, we apply the formula $T_{\text{comm}} = \frac{D_{\text{comm}}}{B}$, where $D_{\text{comm}}$ denotes the data volume to be transferred and $B$ signifies the system's overall bandwidth. Accordingly, the communication energy cost $C_{\text{comm}}$ is determined by $C_{\text{comm}} = T_{\text{comm}} \times P_{\text{comm}}$, with $P_{\text{comm}}$ indicating the transmission power of the wireless network card.

\subsection{Performance Analysis}

\subsubsection{Test Accuracy Evaluation} DA-DPFL outshines all baselines in \textit{top}-1 accuracy across five out of six scenarios, maintaining robustness under extreme non-iid conditions ($n_{cls}=2$) (Fig. \ref{figure:baselines_comparison}, Table \ref{table:accuracy_comparison}). It exceeds the next best DFL baselines (DisPFL and GossipFL) by $2-3\%$, with a minor shortfall in HAM10000 ($n_{cls}=2$) by $0.5\%$ against DFedAvgM. DA-DPFL consistently surpasses DisPFL in sparse model training and generalization, while maintaining efficient convergence. Conversely, CFL lags in convergence due to its limited client participation per round. Momentum-based methods like DFedAvgM show accelerated initial learning, while BEER, with gradient tracking, exhibits rapid convergence but does not necessarily reduce generalization error. DA-DPFL demonstrates a balanced trade-off between convergence rate and generalization performance, outperforming other baselines achieving target accuracy with reduced costs.

\subsubsection{Efficiency} We evaluate the efficiency of our algorithm by analyzing two key metrics: the Floating Point Operations (FLOP) required for inference, and communication overhead incurred during convergence rounds. To ensure a fair comparison, we employ the initialization protocol from DisPFL, thereby standardizing the initial communication costs and FLOP values in the initial pruning phase of training. 
Notably, the pruning stages integral to DA-DPFL lead to a significant reduction in these costs. This is evidenced by the final sparsity levels achieved: $(0.61, 0.56)$ for HAM10000, $(0.65, 0.73)$ for CIFAR10, and $(0.70, 0.73)$ for CIFAR100 under Dir. and Pat. partitioning, respectively. These results are obtained within the constrained communication rounds. A critical observation is that both the busiest communication costs and training FLOPs for our approach are lower compared to the most efficient DFL baseline, DisPFL. These comparative insights are further elaborated in Table \ref{table:comms_flops_comparison}, with bold values underscoring the efficiency of DA-DPFL.
To quantify the impact of a potential delay in DA-DPFL, we adopt metrics to calculate the total cost $C_{\text{total}}$ defined in \cite{luo2021cost} and  \cite{zhou2022joint} as:
$C_{\text{total}} = (1-\theta)C_{\text{time}} + \theta C_{\text{energy}}$,
where $\theta \in [0,1]$ is set to 0 for extreme time-sensitive applications and to 1 for extreme energy-sensitive tasks. This metric allows for a unified representation of time and energy costs in monetary units (USD $\$$). 
To provide a realistic and practical insight into how the introduction of DA-DPFL would affect the total cost needed for the whole process of FL, we chose to combine the communication and the computation cost (FLOP) in the form of energy expenditure, i.e.,
$C_{\text{energy}} = C_{\text{comm}} + C_{\text{comp}}$,
where $C_{\text{comm}}$ and $C_{\text{comp}}$ is communication and computational cost, respectively. 
Figures \ref{fig:total_cost} and \ref{fig:total_cost_cifar100} show the cost-effectiveness of DA-DPFL compared to other DFL baselines. Initially, when $\theta \to 0$ 
, DA-DPFL incurs a higher time cost. However, as $\theta$ increases beyond 0.2, DA-DPFL demonstrates remarkable advantages over the other DFL algorithms (represented by solid lines), with its lead expanding as $\theta$ further increases. Due to the system configuration, CFLs have significantly lower communication (1\%) and computation (10\%) costs compared to DFL, which (indicated by dotted lines) exhibits superior cost efficiency overall, but lower convergence speed. Overall, even when considering waiting time, DA-DPFL successfully achieves both cost and learning (convergence speed) efficiency.


\begin{table*}[ht]
\centering
\caption{Accuracy comparison of federated learning methods across different datasets}
\begin{tabular}{lcccccc}
\hline
& \multicolumn{2}{c}{HAM10000} & \multicolumn{2}{c}{CIFAR10} & \multicolumn{2}{c}{CIFAR100} \\ 
\hline 
&  Dir. $(0.5)$ & Pat. $(2)$ & Dir. $(0.3)$ & Pat. $(2)$ & Dir. $(0.3)$ & Pat. $(10)$ \\
\hline
FedAvg \cite{McMahan2017} & 65.92 $\pm0.3$ & 55.68 $\pm0.4$ & 79.30 $\pm0.2$ & 60.09 $\pm0.2$ & 46.21 $\pm0.4$ & 41.26 $\pm0.3$ \\
Ditto \cite{Li2021a} & 65.19 $\pm0.2$ & 80.17 $\pm0.1$ & 73.21 $\pm0.2$ & 85.78 $\pm0.1$ & 34.83 $\pm0.2$ & 64.41 $\pm0.3$ \\
FedDST \cite{bibikar2022federated} &66.11 $\pm 0.3$ & 55.07 $\pm 0.4$ & 78.47 $\pm 0.2$ & 56.32 $\pm 0.3$ & 46.01 $\pm0.2$ & 41.42 $\pm 0.2$ \\
\hline
GossipFL \cite{tang2022gossipfl} & 72.92 $\pm 0.1$ & 88.05 $\pm 0.1$ & 66.43 $\pm 0.1$ & 86.60 $\pm0.1$ & 45.09 $\pm0.1$ & 66.03 $\pm0.1$ \\
DFedAvgM \cite{sun2022decentralized} & 68.30 $\pm 0.1$ & \textbf{88.89} $\pm 0.1$ & 65.05 $\pm 0.1$ & 85.34 $\pm0.2$ & 24.11 $\pm 0.1$ & 57.41 $\pm 0.1$ \\
DisPFL \cite{dai2022dispfl} & 71.56 $\pm 0.1$ & 80.09 $\pm 0.1$ & 85.85 $\pm 0.2$ & 90.45 $\pm 0.2$ & 51.05 $\pm0.3$ & 72.22 $\pm0.2$ \\
BEER \cite{zhao2022beer} & 69.80 $\pm0.1$ & 88.75 $\pm0.2$ & 62.94 $\pm 0.1$ & 85.48 $\pm0.1$ & 27.79 $\pm0.1$ & 58.71 $\pm0.1$ \\
DFedSAM \cite{shi2023improving} & 73.74 $\pm 0.2$ & 88.47 $\pm 0.3$ & 75.74 $\pm 0.2$ & 83.51 $\pm0.1$ & 47.86 $\pm0.2$ & 71.76 $\pm 0.1$ \\
DA-DPFL (\textit{Ours}) & \textbf{76.32} $\pm0.3$ & 88.36 $\pm0.3$ & \textbf{89.08 $\pm0.3$} & \textbf{91.87 $\pm0.1$} & \textbf{53.53 $\pm0.2$} & \textbf{74.91 $\pm0.1$} \\
\hline
\end{tabular}
\label{table:accuracy_comparison}
\end{table*}



\begin{table}[tb]
\centering
\caption{Busiest Communication Cost \& Final Training FLOPs of all methods.}
\begin{tabular}{p{1.7cm}@{\hspace{0.2em}}c@{\hspace{0.2em}}@{\hspace{0.2em}}c@{\hspace{0.2em}}@{\hspace{0.2em}}c@{\hspace{0.2em}}@{\hspace{0.2em}}c@{\hspace{0.2em}}@{\hspace{0.2em}}c@{\hspace{0.2em}}@{\hspace{0.2em}}c@{\hspace{0.2em}}}
\hline
& \multicolumn{2}{@{\hspace{0.2em}}c}{HAM10000} & \multicolumn{2}{c}{CIFAR10} & \multicolumn{2}{c}{CIFAR100} \\ 
\hline 
& Com. & FLOP & Com. & FLOP & Com. & FLOP \\
& (MB) & (1e12) & (MB) & (1e12) & (MB) & (1e12) \\
\hline
FedAvg & 887.8 & 3.6 &  426.3 & 8.3 & 353.3 & 2.3 \\
Ditto & 887.8 & 3.6 &  426.3 & 8.3 & 353.3 & 2.3\\
FedDST & 443.8 & 2.0 & 223.1 & 7.1 & 176.7 & 1.6 \\
\hline
GossipFL & 443.8 & 3.6 & 223.1 & 8.3 & 176.7 & 2.3 \\
DFedAvgM & 443.8 & 3.6 & 223.1 & 8.3 & 176.7 & 2.3 \\
DisPFL & 443.8 & 2.0 & 223.1 & 7.1 & 176.7 & 1.6 \\
BEER & 443.8 & 3.6 & 223.1 & 8.3 & 176.7 & 2.3 \\
DFedSAM & 887.8 & 7.2 & 426.3 & 17 & 353.3 & 4.6 \\
DA-DPFL\_Dir & \textbf{346.2} & \textbf{1.9} & \textbf{149.1} & \textbf{4.1} & \textbf{107.7} & \textbf{1.0} \\
DA-DPFL\_Pat & \textbf{394.4} & \textbf{2.0} & \textbf{115.1} & \textbf{3.8} & \textbf{94.8} & \textbf{0.9} \\
\hline
\end{tabular}
\label{table:comms_flops_comparison}
\end{table}


\subsubsection{Extended Topology}
To demonstrate the adaptability of our DA-DPFL, we conducted further experiments utilizing both \textit{ring} and \textit{fully-connected} (FC) topologies. 
These experiments were carried out in comparison with the above baselines using a \textit{Dirichlet} partition with $\alpha=0.3$. The results, presented in Table \ref{tab:ring_fc_results} show that DA-DPFL consistently surpasses other baselines, achieving higher performance with sparser models, within $500$ communication rounds. 
DA-DPFL maintains a significant lead in performance.

\begin{table}[ht]
\centering
\caption{Performance comparison for ring and fully connected topologies}
\label{tab:ring_fc_results}
\begin{tabular}{llcc}
\hline
Topology & Method    & Acc (\%) & Sparsity (s) \\ \hline
Ring     & GossipFL  & 66.12 $\pm 0.1$    & 0.00 \\
         & DFedAvgM  & 65.89 $\pm 0.1$    & 0.00 \\
         & DisPFL    & 67.65 $\pm 0.2$    & 0.50 \\
         & BEER      & 62.92 $\pm 0.1$    & 0.00 \\
         & DFedSAM   & 66.61 $\pm 0.2$    & 0.00 \\
         & DA-DPFL   & \textbf{69.83} $\pm 0.3$ & 0.65 \\ \hline
FC       & GossipFL  & 71.22 $\pm 0.2$    & 0.00 \\
         & DFedAvgM  & 69.89 $\pm 0.1$    & 0.00 \\
         & DisPFL    & 86.54 $\pm 0.2$    & 0.50 \\
         & BEER      & 68.77 $\pm 0.1$    & 0.00 \\
         & DFedSAM   & 79.63 $\pm 0.3$    & 0.00 \\
         & DA-DPFL   & \textbf{89.11} $\pm 0.2$ & 0.68 \\ \hline
\end{tabular}
\end{table}

\subsection{Analysis on neighborhood size \(M\)} We train ResNet18 on CIFAR10 to examine the impacts of the hyper-parameters \(M\). The neighborhood size parameter \(M\) markedly influences the scheduling efficiency in DA-DPFL. A higher \(M\) value accelerates the convergence of our approach, primarily by enhancing the reuse of the trained model throughout the training process, albeit at the risk of potential delay.
As illustrated in Fig.\ref{fig:combined_plot} Bottom, while \(M=20\) slightly outperforms \(M=10\), it incurs approximately double the time delays. 
\begin{figure}[hptb]
    \centering
    \includegraphics[width=0.48\textwidth]{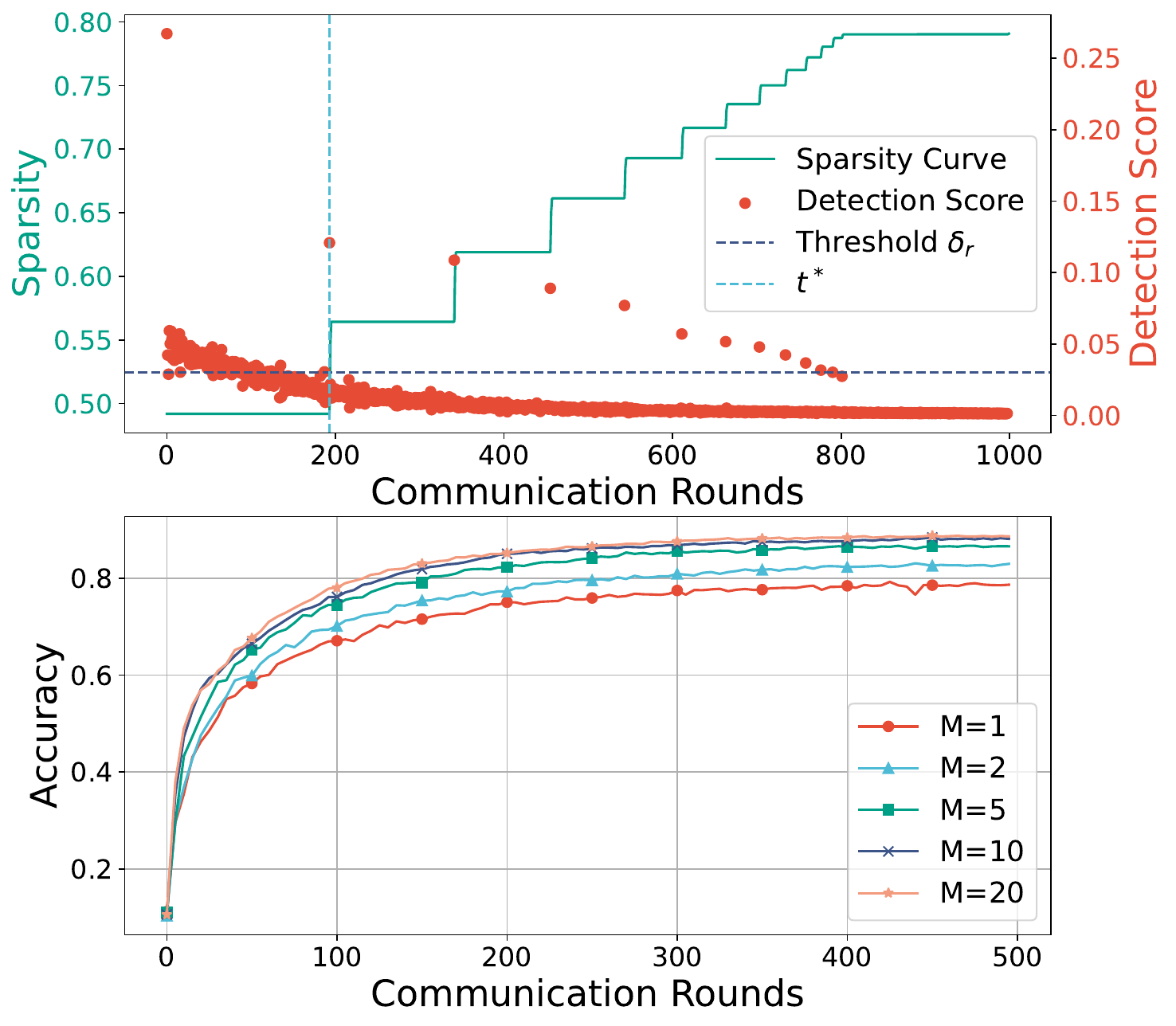}
    \caption{\textbf{(Top)} Relationship between sparsity and detection score; \textbf{(Bottom)} Impact of $M$ involved in each training round on accuracy (CIFAR10, \textit{Dir}(0.3), $\delta_{pr}=0.03$).}
    \label{fig:combined_plot}
\end{figure}
\subsection{Ablation Study}
\subsubsection{Threshold $\delta_{pr}$}
Extending total communication rounds from 500 to 1000, we ascertain that a target sparsity of \(s=0.8\) is attainable without compromising accuracy (DA-DPFL achieves 89\% over DisPFL’s 83.27\%). This finding challenges the generalization gap assumption in \cite{dai2022dispfl}, reducing the need for precise initial sparsity ratio selection in fixed sparsity pruning as DA-DPFL achieves equivalent or lower generalization error at higher sparsity levels through further pruning. Fig.\ref{fig:combined_plot}(Top) shows the pruning decisions based on average detection scores across clients and their sparsity trajectories. The initial high detection score validates the substantial disparity between the random mask and the RigL algorithm-derived mask, differing from EarlyCrop’s centralized, densely initialized model approach. Post \(t^{*}\), client models undergo incremental pruning in DA-DPFL, with the pruning scale diminishing due to reduced model compressibility, as evidenced by sparsity alterations at each pruning phase. To ascertain the effect of the early pruning threshold $\delta_{pr}$, we conducted experiments with CIFAR10 and ResNet18. Fig.\ref{fig:epth} underscores pruning timing significance, indicating varying optimal thresholds for different data partitions and corresponding detection score divergences. Early pruning, though accelerating sparsity achievement, impedes critical learning phases, while excessively delayed pruning equates to post-training pruning, incurring higher costs. Consequently, our results advocate for early-stage further pruning, ideally between 30-40\% of total communication rounds, aligning with a threshold range of 0.02-0.03, to balance model performance with energy efficiency.
\begin{figure}[htbp]
    \centering
    \includegraphics[width=0.48\textwidth]{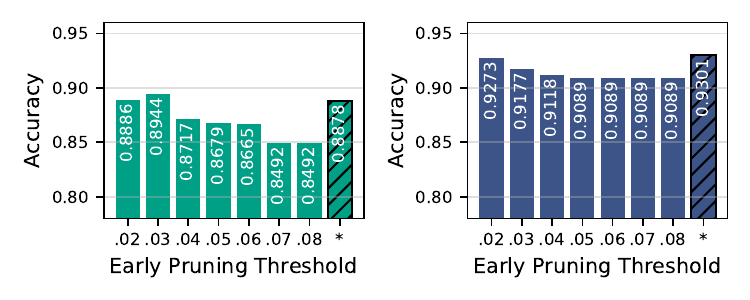}
    \caption{Impact of $\delta_{pr}$ on final prediction accuracy of achieving sparsity $s=0.8$ with CIFAR10 ($M=10$) \textit{Dir} (left) and \textit{Pat} (right) partitions (in which * stands for DA-DPFL without further pruning, i.e., fixed sparsity $s=0.5$).}
    \label{fig:epth}
\end{figure}
\subsubsection{Waiting Threshold $N$}
To ensure a fair comparison, we add $N=\{0,2,5\}$ with the same experiment setup as in \ref{subsec:experiment} for \textit{Dir} partition.
The experimental results in Fig. \ref{fig:wait_number} demonstrate a clear trend: increasing the \(N\) consistently improves model accuracy across the CIFAR10 and CIFAR100 datasets, with CIFAR10 seeing up to a 1.87\% increase and CIFAR100 a 1.41\% increase in accuracy from \(N=0\) to \(N=10\). Interestingly, the HAM10000 dataset shows no  \(N=5\) achieves the best performance, suggesting task-specific characteristics influence the optimal selection of $N$. Even the model performance for $N=0$ cases are higher than DisPFL, which illustrates the effectiveness of our \textit{further pruning} strategy. Furthermore, it is possible to obtain redundancy in reusing models, especially when $M$ is large. By selecting $N$, one can trade off between waiting and model performance.

\begin{figure}[htbp]
\centering
\includegraphics[width=2in]{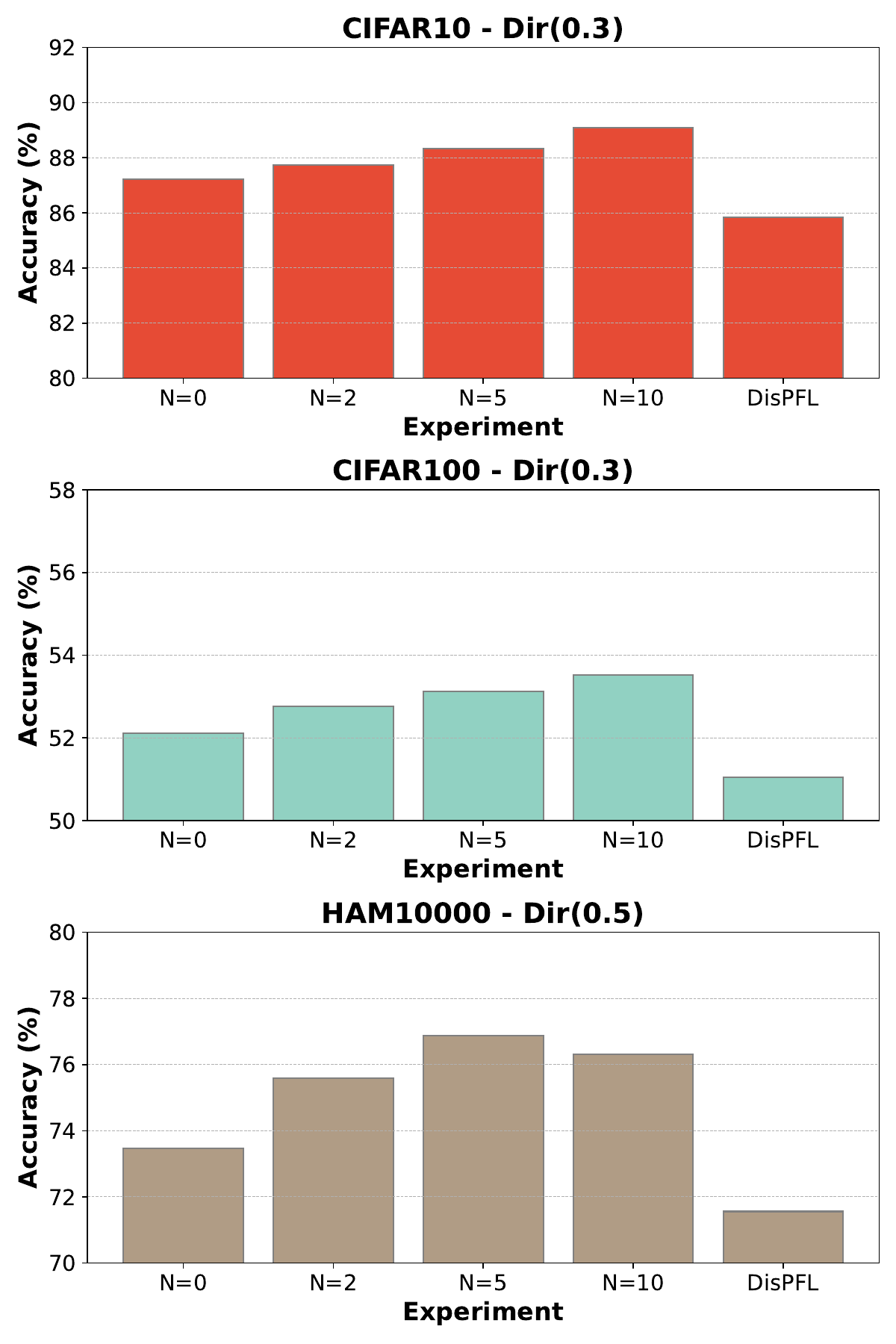}
\caption{Performance of different number of maximum waiting numbers $N$}
\label{fig:wait_number}
\end{figure}

\subsection{Parallelism and Delay}\label{sec:cost_analysis}
We conducted $10,000$ iterations to estimate the average impact on 
parallelism and latency attributed to waiting times. Here, we define parallelism as the proportion of clients that commence training concurrently. The result is depicted in Figures 
Figure \ref{fig:parallelism} illustrates a decline in parallelism as the number of clients in the neighborhood $M$ increases (having $K=100$ clients). 
Figure \ref{fig:waiting_delay} shows delay against $M$. The black line, representing $N=M$, delineates the outcome of awaiting the most delayed clients, i.e., \textit{without any control}. It is evidenced to scale almost linearly with the neighborhood size $M$. 
Moreover, in Figure \ref{fig:waiting_delay}, one can observe the efficacy of the constraint $N \leq M$ in mitigating delays while increasing $M$. 
The mean maximum waiting time is indicative of the multiplier effect on the time required for each communication round relative to traditional decentralized FL. The lines corresponding to $N \in \{1,2,5,10,20\}$ corroborate that the waiting period can be effectively regulated by $N$. 
In cases where $M=N=100$, DA-DPFL transits to sequential learning, while with $M=100$ and $N=2$, DA-DPFL sustains a comparatively high degree of parallelism with opportunities for model reuse. 
With $N=0$, DA-DPFL reduces to DisPFL with our pruning strategy.


\begin{figure}[htbp]
\centering
\includegraphics[width=2.5in]{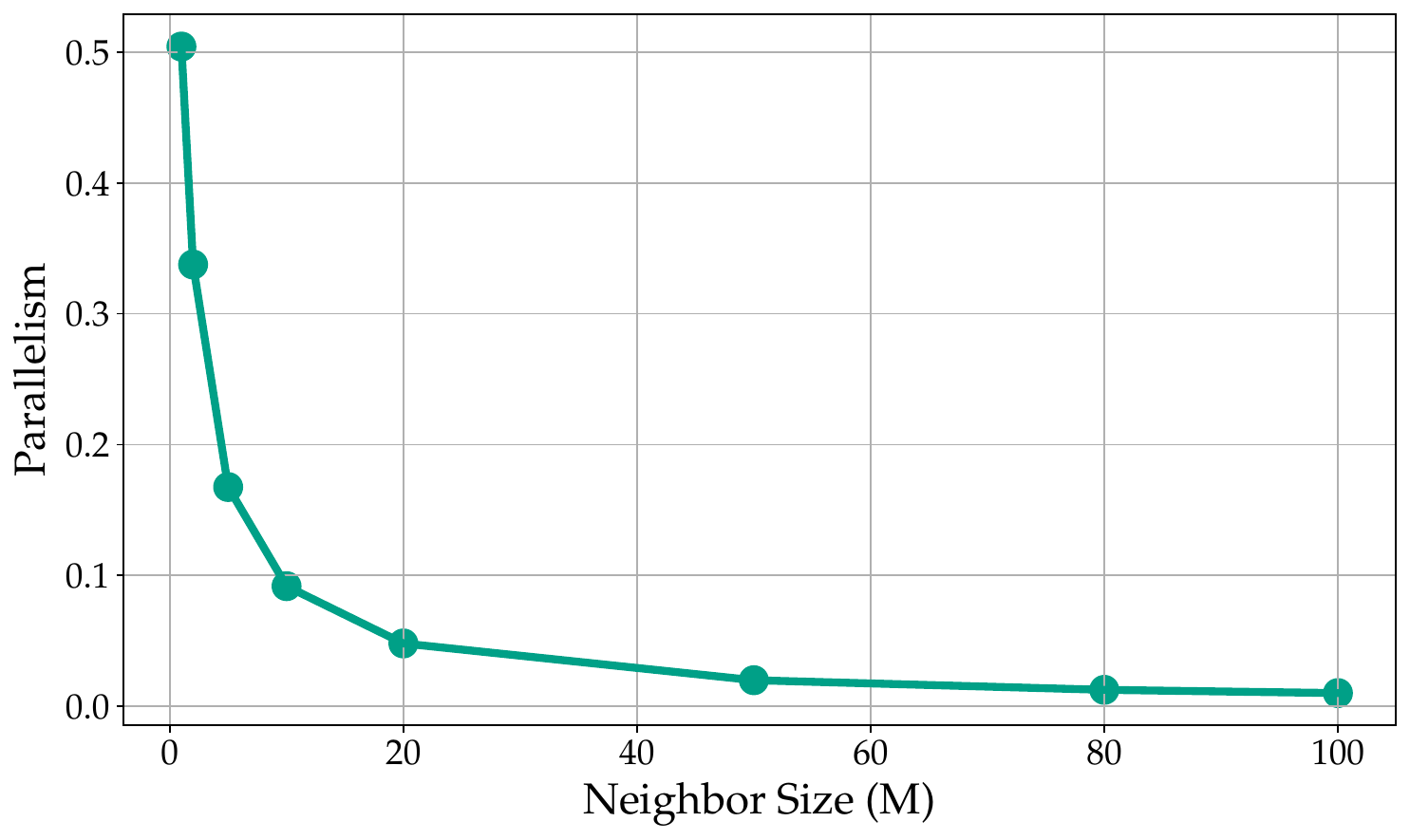}
\caption{Characteristic of proposed time-varying connected topology: impact on parallelism with different number of neighbor clients. }
\label{fig:parallelism}
\end{figure}

\begin{figure}[htbp]
\centering
\includegraphics[width=2.5in]{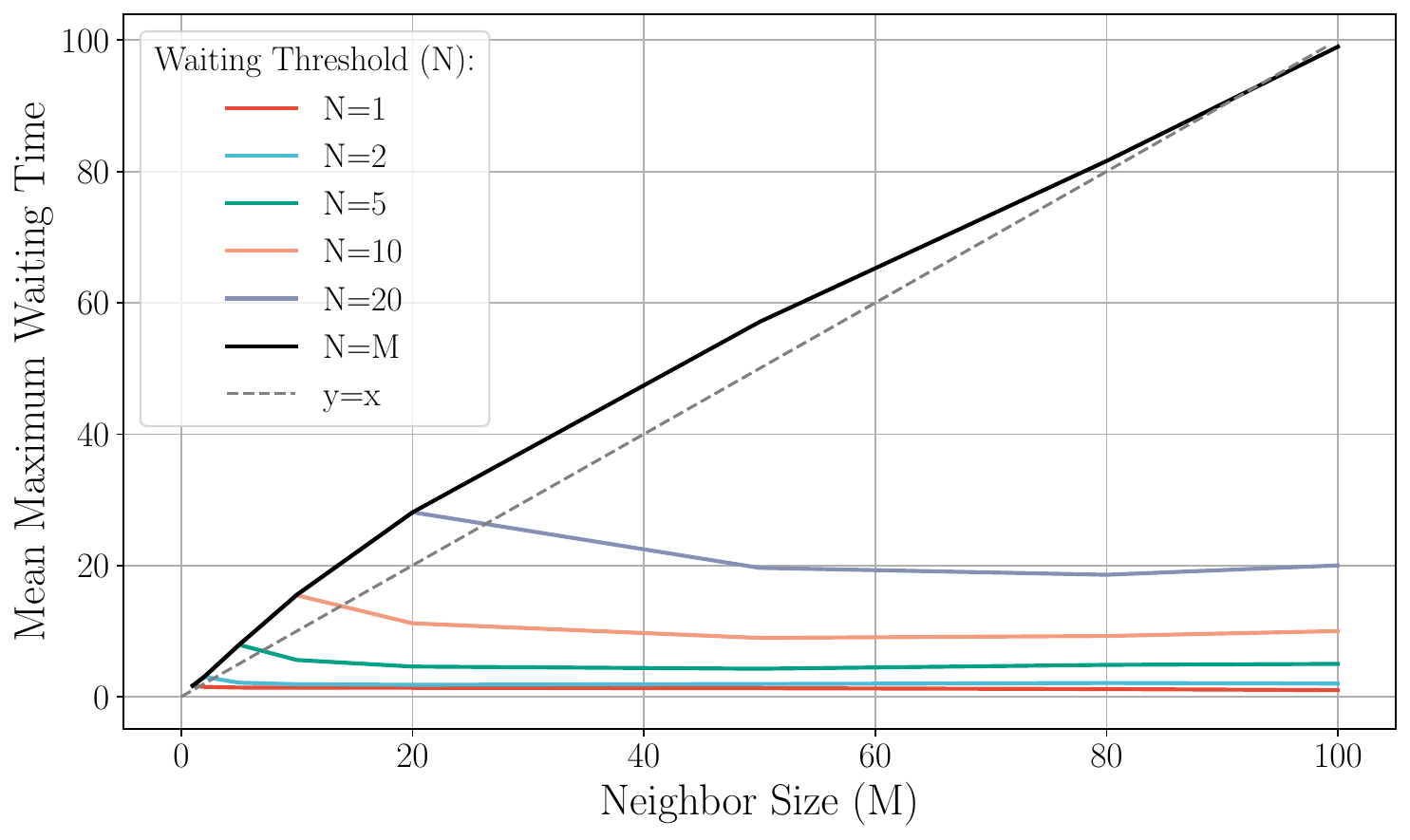}
\caption{Characteristic of proposed time-varying connected topology: delay induced by waiting with different number of neighbor clients.}
\label{fig:waiting_delay}
\end{figure}

\section{Conclusions}
DA-DPFL is a fair learning scheduling framework that cost-effectively deals with data heterogeneity. 
DA-DPFL conserves computational \& communication resources and accelerates the learning process by introducing a novel sparsity-driven pruning technique. 
We provide a theoretical analysis on DA-DPFL's convergence. Comprehensive experiments and comparisons with DFL and CFL baselines in PFL context showcase learning efficiency, enhanced model accuracy, and energy efficiency, which confirms the effectiveness of DA-DPFL in practical applications.
DA-DPFL sets the stage for future plans in adaptive algorithms handling time-series and graph data across diverse topologies expanding our models applicability to real-world scenarios.

\appendix

\section*{Additional Algorithms}
\subsection{SAP (PQI) Algorithm}\label{sec:appendix_SAP}
In our approach, we adopt the SAP algorithm, as shown in Algorithm \ref{algorithm:SAP}, to assess the compressibility of neural networks, characterized by four distinct features. Firstly, the initial model employed in our study is inherently sparse. Secondly, we implement PQI pruning as a further pruning technique within a Federated Learning (FL) framework, based on other fixed pruning methodologies. Thirdly, our method incorporates a meticulously designed pruning strategy that ensures proper pruning frequency and specifically avoids further pruning during the \textit{critical learning} period. Lastly, unlike conventional practices, we integrate the SAP algorithm during the training phase, as opposed to applying it post-training.

\begin{algorithm}
\caption{PQI-driven pruning (Layerwise)}
\label{algorithm:SAP}
\begin{algorithmic}[1]
\STATE \textbf{Input:} $\tilde{\boldsymbol{\omega}}_{k,E_{l}}^{t}$, mask $\mathbf{m}_{k}^{t}$, norm index $0 < p \leq 1 < q$, compression hyper-parameter $\eta_{c}$, scaling factor $\gamma$, pruning threshold $\beta$, further pruning time $\mathcal{T}$.
\STATE \textbf{Output:} $\tilde{\boldsymbol{\omega}}_{k,E_{l}}^{t\prime}$, corresponding mask $\mathbf{m}_{k}^{t\prime}$
\FOR{$t \in \mathcal{T}$}
    \FOR{each layer $l \in |L|$}
        \STATE Compute dimensionality of $\tilde{\boldsymbol{\omega}}_{k,E_{l}}^{l,t}$: $d_{t}^{l} = |\mathbf{m}_{k}^{l,t}|$
        \STATE Compute PQ Index $I(\tilde{\boldsymbol{\omega}}_{k,E_{l}}^{l,t}) = 1 - \left(\frac{1}{d_{t}^{l}}\right)^{\frac{1}{q} - \frac{1}{p}} \frac{\|\tilde{\boldsymbol{\omega}}_{k,E_{l}}^{l,t}\|_p}{\|\tilde{\boldsymbol{\omega}}_{k,E_{l}}^{l,t}\|_q}$
        \STATE Compute the lower boundary required model parameters to keep $r_{t}^{l} = d_{t}^{l}(1 + \eta_c)^{-\frac{q}{q-p}}\left[1 - I(\tilde{\boldsymbol{\omega}}_{k,E_{l}}^{l,t})\right]^{\frac{p}{q-p}}$
        \STATE Compute the number of model parameters to prune
        \STATE $c_{t}^{l} = \left\lfloor d_{t}^{l} \cdot \min\left(\gamma\left(1 - \frac{r_{t}^{l}}{d_{t}^{l}}\right), \beta\right) \right\rfloor$
        \STATE Prune $c_{t}^{l}$ model parameters with the smallest magnitude based on $\tilde{\boldsymbol{\omega}}_{k,E_{l}}^{l,t}$ and $\mathbf{m}_{k}^{l,t}$
        \STATE Find new layer mask $\mathbf{m}_{k}^{l,t\prime}$ and pruned model $\tilde{\boldsymbol{\omega}}_{k,E_{l}}^{l, t\prime}$ at layer $l$
    \ENDFOR
    \STATE Obtain $\tilde{\boldsymbol{\omega}}_{k,E_{l}}^{t\prime}$ and corresponding mask $\mathbf{m}_{k}^{t\prime}$
\ENDFOR
\end{algorithmic}
\end{algorithm}

\subsection{RigL Algorithm}
\label{sec:rigl}
We follow the RigL algorithm to generate the new mask each communication round, which is shown in Algorithm \ref{algorithm:rigl}.

\begin{algorithm}
\caption{RigL mask generation}
\label{algorithm:rigl}
\begin{algorithmic}[1]
\STATE \textbf{Input:} $\tilde{\boldsymbol{\omega}}_{k,E_{l}}^{t\prime}$, corresponding mask $\mathbf{m}_{k}^{t\prime}$, global rounds T, initial annealing ratio $\alpha_{0}$
\STATE \textbf{Output:} New mask $\mathbf{m}_{k}^{t+1}$
\STATE Compute prune ratio $\alpha_t= \frac{\alpha}{2} \left(1 + \cos\left(\frac{t\pi}{T}\right)\right)$ 
\STATE Sample one batch of local training data to calculate dense gradient $\mathbf{g}(\tilde{\boldsymbol{\omega}}_{k,E_{l}}^{t\prime})$
\FOR{each layer $l \in |L|$}
    \STATE Update mask $\mathbf{m}_{k}^{l,t+\frac{1}{2}\prime}$ by pruning $\alpha_t$ percentage of weights based on weight magnitude.
    \STATE Update mask $\mathbf{m}_{k}^{l,t+1}$ via regrowing weights with gradient information $\mathbf{g}(\tilde{\boldsymbol{\omega}}_{k,E_{l}}^{t\prime})$.
\ENDFOR
\STATE Find new mask $\mathbf{m}_{k}^{t+1}$.
\end{algorithmic}
\end{algorithm}

\label{appendix:baselines}
\section*{Convergence Analysis}\label{appendice_proof}
In a time-varying connected topology, both $\mathcal{G}_{k}^{t}$ and $\mathcal{N}_{k}^{t}$ are randomly generated. We consider $\mathcal{N}_{k}^{t}=\mathcal{G}_{k}^{t}$ in theoretical analysis since our scheduling policy is regarded as one type of client selection policy.

\subsection{Client Selection Analysis}\label{appendix:client_hypergeometric}

Given a system with \( K \) clients with indices sorted from $1$ to $K$, and considering a particular client with index \( k \) then: 
(1) there are \( K-1 \) potential clients to select from; (2) among these \( K-1\)  clients, \( k-1 \) clients have an index less than \( k \); (3) we wish to select \( M \) total clients in each sample as client's $k$ neighbors. Hence, $|\mathcal{N}_{(a)k}|$ is a hypergeometric random variable and the probability \( \mathbb{P}(m, k) = \mathbb{P}(|\mathcal{N}_{(a)k}| = m)\) that exactly \( m \) of the selected clients have an index less than \( k \) is
\begin{equation}
\mathbb{P}(m, k) = \frac{\binom{k-1}{m} \binom{K - k}{M - m}}{\binom{K-1}{M}},
\end{equation}
where $\sum_{0\leq m \leq \min_{(M,k-1)}} \mathbb{P}(m,k)=1$, which essentially follows from Vandermonde's identity.
\subsection{Auxiliary Lemmas and Proofs}
DA-DPFL's local update follows:
\begin{equation}\label{eq:local_update}
    \Tilde{\boldsymbol{\omega}}_{k,\tau+1}^{t} = \Tilde{\boldsymbol{\omega}}_{k,\tau}^{t} - \eta \mathbf{g}_{k,\tau}^{t}\odot \mathbf{m}_{k}^{t},
\end{equation}
where $\mathbf{g}_{k,\tau}^{t}=\nabla F_{k}(\Tilde{\boldsymbol{\omega}}_{k,\tau}^{t})$. This implies that
\begin{equation}\label{eq:local_train_mask}
    \eta \sum_{\tau=0}^{E_{l}-1}\mathbf{g}_{k,\tau}^{t}\odot \mathbf{m}_{k}^{t} = (\Tilde{\boldsymbol{\omega}}_{k,0}^{t}-\Tilde{\boldsymbol{\omega}}_{k,E_{l}}^{t})\odot \mathbf{m}_{k}^{t}.
\end{equation}
Note that $\boldsymbol{\omega}_{k}^{t+1}=\Tilde{\boldsymbol{\omega}}^{t}_{k,E_{l}}$ and $\Tilde{\boldsymbol{\omega}}_{k}^{t}=\Tilde{\boldsymbol{\omega}}_{k,0}^{t}$. Considering traditional aggregation, like in FedAvg, \textbf{without} scheduling first, we then have Lemma \ref{lemma2:boundary_local_updates}.

\begin{lemma}\label{lemma2:boundary_local_updates}
     Under Assumptions~\ref{assumption:mu_lips}~to~\ref{assumption:bounded_var}, for some $M>1$ and $\eta$ such that $\eta^{2} \leq \frac{1}{12M\mu^{2}(M-1)(2M-1)}$,
\begin{align}\label{eq:lemma2_thm1_0}
&\frac{1}{K}\sum_{k=1}^{K}\mathbb{E}\|\boldsymbol{\omega}_{k}^{t+1} - \Tilde{\boldsymbol{\omega}}_{k}^{t}\|^{2} 
\leq \left( e^{\frac{E_{l}}{2M-2}} - 1 \right) (2M-2) \nonumber \\
&\quad \times \left( \frac{2M}{2M-1}\eta^{2}\sigma^{2}_{l} + 6M\eta^{2}\sigma^{2}_{g} \right. \nonumber \\
&\quad \left. + 6M\eta^{2}\frac{\sum_{k=1}^{K}\mathbb{E}\|\nabla f(\Tilde{\boldsymbol{\omega}}_{k}^{t})\|^{2}}{K} \right).
\end{align}

 \end{lemma}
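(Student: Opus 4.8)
\textbf{Proof proposal for Lemma~\ref{lemma2:boundary_local_updates}.}

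The plan is to bound the drift of the local iterates $\Tilde{\boldsymbol{\omega}}_{k,\tau}^{t}$ away from the starting point $\Tilde{\boldsymbol{\omega}}_{k}^{t}=\Tilde{\boldsymbol{\omega}}_{k,0}^{t}$ over the $E_{l}$ local SGD steps, and then sum over $k$. First I would use the update rule \eqref{eq:local_update} to write $\Tilde{\boldsymbol{\omega}}_{k,\tau}^{t}-\Tilde{\boldsymbol{\omega}}_{k}^{t}=-\eta\sum_{s=0}^{\tau-1}\mathbf{g}_{k,s}^{t}\odot\mathbf{m}_{k}^{t}$, take norms, and apply the standard step: for each fixed $\tau$, expand $\mathbb{E}\|\Tilde{\boldsymbol{\omega}}_{k,\tau+1}^{t}-\Tilde{\boldsymbol{\omega}}_{k}^{t}\|^{2}$, split the stochastic gradient into its mean $\nabla f_{k}(\Tilde{\boldsymbol{\omega}}_{k,\tau}^{t})$ plus zero-mean noise (controlled by $\sigma_l^2$ via Assumption~\ref{assumption:bounded_var}), and then further decompose $\nabla f_{k}(\Tilde{\boldsymbol{\omega}}_{k,\tau}^{t})$ around $\nabla f_{k}(\Tilde{\boldsymbol{\omega}}_{k}^{t})$ (using $\mu$-smoothness, Assumption~\ref{assumption:mu_lips}) and then around $\nabla f(\Tilde{\boldsymbol{\omega}}_{k}^{t})$ (using the heterogeneity bound $\sigma_g^2$). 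This produces a recursion of the form $a_{\tau+1}\le (1+\tfrac{1}{2M-2})a_\tau + (\text{noise terms})$, where $a_\tau=\tfrac1K\sum_k\mathbb{E}\|\Tilde{\boldsymbol{\omega}}_{k,\tau}^{t}-\Tilde{\boldsymbol{\omega}}_{k}^{t}\|^{2}$, and the coefficient $\tfrac{1}{2M-2}$ comes precisely from the Young's-inequality split with weight tuned so the constant matches the claimed $e^{E_l/(2M-2)}$.

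Next I would unroll this recursion from $\tau=0$ to $\tau=E_{l}-1$: using $\prod(1+x)\le e^{E_l x}$ and $\sum_{j=0}^{E_l-1}(1+x)^j \le \tfrac{1}{x}(e^{E_l x}-1)$ with $x=\tfrac{1}{2M-2}$, so that $\tfrac1x=2M-2$, one arrives at the $\left(e^{E_l/(2M-2)}-1\right)(2M-2)$ prefactor multiplying a bracket of the form $\tfrac{2M}{2M-1}\eta^2\sigma_l^2 + 6M\eta^2\sigma_g^2 + 6M\eta^2\cdot\tfrac1K\sum_k\mathbb{E}\|\nabla f(\Tilde{\boldsymbol{\omega}}_k^t)\|^2$. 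The mask $\mathbf{m}_k^t$ is idempotent and $\|x\odot\mathbf{m}_k^t\|\le\|x\|$, so it only ever helps in the inequalities and can be dropped after noting $\Tilde{\boldsymbol{\omega}}_{k,\tau}^{t}$ already lies in the masked subspace. The stepsize condition $\eta^2\le \tfrac{1}{12M\mu^2(M-1)(2M-1)}$ is exactly what is needed to absorb the $\mu^2\eta^2$ terms generated by the smoothness step back into the linear part of the recursion (i.e. to keep the effective growth factor at most $1+\tfrac{1}{2M-2}$), so I would track constants carefully enough to verify that this threshold suffices.

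The main obstacle I anticipate is bookkeeping the constants through the double decomposition so that the three error coefficients come out as the stated $\tfrac{2M}{2M-1}$, $6M$, $6M$ rather than merely $O(1)$, $O(M)$, $O(M)$ — in particular, choosing the Young's-inequality weights consistently at every split (there are at least three: noise vs.\ mean, local-gradient-drift vs.\ base, and heterogeneity vs.\ full-gradient) so that the telescoping sum collapses cleanly and the stepsize bound is precisely $\eta^2\le \tfrac{1}{12M\mu^2(M-1)(2M-1)}$. A secondary subtlety is that this lemma deliberately analyzes the \emph{unscheduled} aggregation (plain FedAvg-style averaging over $\mathcal{G}_k^t$ of size $M$), so no hypergeometric/reuse structure from Proposition~\ref{proposition1} enters here; the factor $M$ appears only through $|\mathcal{G}_k^t|=M$ in Assumption~\ref{assumption:mask_influence}, and I would make sure the averaging is handled by Jensen/Cauchy--Schwarz before the per-client drift bound is invoked.
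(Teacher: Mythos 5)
Your proposal is correct and follows essentially the same route as the paper: a one-step Young's/Cauchy split (with weight chosen so the contraction factor is $1+\tfrac{1}{2M-1}$, pushed to $1+\tfrac{1}{2M-2}$ by absorbing the $6M\eta^{2}\mu^{2}$ smoothness term via the stepsize condition), followed by unrolling the recursion with the geometric-series bound $\frac{A^{E_l}-1}{A-1}\le\bigl(e^{E_l/(2M-2)}-1\bigr)(2M-2)$. Your remarks on the idempotent mask and on the absence of the scheduling/hypergeometric structure in this lemma also match the paper's treatment.
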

\begin{proof}
 Because the mask $\mathbf{m}_{k}^{t}$ is consistent during training, we omit the expression with the corresponding model for brevity.   
 We firstly consider the traditional weighted average aggregation where

\begin{align}\label{eq:lemma2_thm1_1}
&\mathbb{E}\|\Tilde{\boldsymbol{\omega}}_{k,\tau+1}^{t} - \Tilde{\boldsymbol{\omega}}_{k}^{t}\|^{2} 
= \mathbb{E}\Big\| \tilde{\boldsymbol{\omega}}_{k,\tau}^{t} - \Tilde{\boldsymbol{\omega}}_{k}^{t} 
-\eta \big( g_{k,\tau}^{t}\odot \mathbf{m}_{k}^{t}  \nonumber \\
&\quad - \nabla f_{k}(\tilde{\boldsymbol{\omega}}_{k,\tau}^{t})+ \nabla f_{k}(\tilde{\boldsymbol{\omega}}_{k,\tau}^{t}) - \nabla f(\tilde{\boldsymbol{\omega}}_{k}^{t}) 
+ \nabla f(\tilde{\boldsymbol{\omega}}_{k}^{t}) \nonumber \\
&\quad - \nabla f_{k}(\tilde{\boldsymbol{\omega}}_{k}^{t})+ \nabla f_{k}(\tilde{\boldsymbol{\omega}}_{k}^{t})\big) \Big\|^{2}.
\end{align}

Write $a:=\mathbb{E}\big\|\tilde{\boldsymbol{\omega}}_{k,\tau}^{t} - \eta \big(\tilde{g}_{k,\tau}^{t}\odot \mathbf{m}_{k}^{t} - \nabla f_{k}(\tilde{\boldsymbol{\omega}}_{k,\tau}^{t})\big)- \Tilde{\boldsymbol{\boldsymbol{\omega}}}_{k}^{t} \big\|^{2}$ and $b= \eta^{2}\mathbb{E}\|\nabla f_{k}(\tilde{\boldsymbol{\omega}}_{k,\tau}^{t}) - \nabla f(\tilde{\boldsymbol{\omega}}_{k}^{t})+ \nabla f(\tilde{\boldsymbol{\omega}}_{k}^{t})- \nabla f_{k}(\tilde{\boldsymbol{\omega}}_{k}^{t})+ \nabla f_{k}(\tilde{\boldsymbol{\omega}}_{k}^{t}) \|^{2}$, using the Cauchy’s inequality with a elastic variable $2M=2M>1$, we have 
\begin{equation}\label{eq:lemma2_thm1_2}
    \mathbb{E}\|\boldsymbol{\omega}_{k}^{t+1} - \Tilde{\boldsymbol{\omega}}_{k}^{t}\|^{2} \leq (1+\frac{1}{2M-1})a+2Mb.
\end{equation}
Then,  by Assumptions \ref{assumption:mu_lips} to \ref{assumption:bounded_var} and the triangle inequality, 
\begin{align}\label{eq:lemma2_thm1_3}
    a&\leq \mathbb{E}\|\tilde{\boldsymbol{\omega}}_{k,\tau}^{t}-\Tilde{\boldsymbol{\omega}}_{k}^{t}\|^{2}+\eta^{2}\mathbb{E}\|\tilde{g}_{k,\tau}^{t}\odot \mathbf{m}_{k}^{t} - \nabla f_{k}(\tilde{\boldsymbol{\omega}}_{k,\tau}^{t})\|^{2}\notag\\
    &= \mathbb{E}\|\tilde{\boldsymbol{\omega}}_{k,\tau}^{t}-\Tilde{\boldsymbol{\omega}}_{k}^{t}\|^{2} + \eta^{2}\sigma^{2}_{l},
\end{align}
and 
\begin{align}\label{eq:lemma2_thm1_4}
    b &\leq 3\eta^{2}\big[\mathbb{E}\|\nabla f_{k}(\tilde{\boldsymbol{\omega}}_{k,\tau}^{t}) - \nabla f_{k}(\tilde{\boldsymbol{\omega}}_{k}^{t})\|^{2} \nonumber \\
    &\quad +\mathbb{E}\|\nabla f(\tilde{\boldsymbol{\omega}}_{k}^{t})\|^{2} + \mathbb{E}\|\nabla f_{k}(\tilde{\boldsymbol{\omega}}_{k}^{t}) - \nabla f(\tilde{\boldsymbol{\omega}}_{k}^{t})\|^{2}\big] \notag\\
    &\leq 3\eta^{2}\big[\mathbb{E}\|\nabla f_{k}(\tilde{\boldsymbol{\omega}}_{k,\tau}^{t}) - \nabla f_{k}(\tilde{\boldsymbol{\omega}}_{k}^{t})\|^{2} \nonumber \\
    &\quad +\mathbb{E}\|\nabla f(\tilde{\boldsymbol{\omega}}_{k}^{t})\|^{2} + \mathbb{E}\|\nabla f_{k}(\tilde{\boldsymbol{\omega}}_{k}^{t}) - \nabla f(\tilde{\boldsymbol{\omega}}^{t})\|^{2} \nonumber \\
    &\quad +\mathbb{E}\|\nabla f(\tilde{\boldsymbol{\omega}}_{k}^{t}) - \nabla f(\tilde{\boldsymbol{\omega}}^{t})\|^{2}\big] \notag\\
    &\leq 3\eta^{2}\big[\mu^{2} \mathbb{E}\|\tilde{\boldsymbol{\omega}}_{k,\tau}^{t}-\Tilde{\boldsymbol{\omega}}_{k}^{t}\|^{2} + \mathbb{E}\|\nabla f(\tilde{\boldsymbol{\omega}}_{k}^{t})\|^{2} \nonumber \\
    &\quad +\sigma^{2}_{g}\big].
\end{align}

Substitute Eq.\eqref{eq:lemma2_thm1_3} \& \eqref{eq:lemma2_thm1_4} into Eq. \eqref{eq:lemma2_thm1_2} with some $\eta$ such that $\eta^{2} \leq \frac{1}{12M\mu^{2}(M-1)(2M-1)}$, we have
\begin{align}\label{eq:lemma2_thm1_5}
&\mathbb{E}\|\Tilde{\boldsymbol{\omega}}_{k,\tau+1}^{t} - \Tilde{\boldsymbol{\omega}}_{k}^{t}\|^{2} 
\leq (1+\frac{1}{2M-1} + 6M\eta^{2}\mu^{2})\mathbb{E}\|\tilde{\boldsymbol{\omega}}_{k,\tau}^{t} - \Tilde{\boldsymbol{\omega}}_{k}^{t}\|^{2} \notag \\
&\quad + (1+\frac{1}{2M-1})\eta^{2}\sigma^{2}_{l} + 6M\eta^{2}(\sigma^{2}_{g}+\mathbb{E}\|\nabla f(\tilde{\boldsymbol{\omega}}_{k}^{t})\|^{2}) \notag \\
&\leq (1+\frac{1}{2M-2})\mathbb{E}\|\tilde{\boldsymbol{\omega}}_{k,\tau}^{t} - \Tilde{\boldsymbol{\omega}}_{k}^{t}\|^{2} \notag \\
&\quad + (1+\frac{1}{2M-1})\eta^{2}\sigma^{2}_{l} + 6M\eta^{2}(\sigma^{2}_{g}+\mathbb{E}\|\nabla f(\tilde{\boldsymbol{\omega}}_{k}^{t})\|^{2}) 
\end{align}

Let \( A = 1 + \frac{1}{2(M-1)} \), \( B = \left(1+\frac{1}{2M-1}\right)\eta^{2}\sigma^{2}_{l} + 6M\eta^{2}\sigma^{2}_{g} \), and \( C = 6M\eta^{2}\mathbb{E}\|\nabla f(\tilde{\boldsymbol{\omega}}_{k}^{t})\|^{2} \), then the recursive inequality Eq.\eqref{eq:lemma2_thm1_5} becomes
\begin{align}\label{eq:lemma2_thm1_6}
\mathbb{E}\|\Tilde{\boldsymbol{\omega}}_{k,\tau+1}^{t} - \Tilde{\boldsymbol{\omega}}_{k}^{t}\|^{2} &\leq A\mathbb{E}\|\tilde{\boldsymbol{\omega}}_{k,\tau}^{t} - \Tilde{\boldsymbol{\omega}}_{k}^{t}\|^{2} + B + C.
\end{align}

When $\tau = 0$, the initial condition is $\mathbb{E}\|\tilde{\boldsymbol{\omega}}_{k,0}^{t} - \Tilde{\boldsymbol{\omega}}_{k}^{t}\|^{2} = 0$. For $\tau = 1$ to $E_l$, we apply the inequality Eq.\eqref{eq:lemma2_thm1_6} $E_l$ times, summing up the constants multiplied by their respective powers of $A$ gives
\begin{align}
\mathbb{E}\|\Tilde{\boldsymbol{\omega}}_{k,E_l}^{t} - \Tilde{\boldsymbol{\omega}}_{k}^{t}\|^{2} &\leq A^{E_l}\mathbb{E}\|\tilde{\boldsymbol{\omega}}_{k,0}^{t} - \Tilde{\boldsymbol{\omega}}_{k}^{t}\|^{2} + B\sum_{j=0}^{E_l-1} A^j \notag\\
&+ C\sum_{j=0}^{E_l-1} A^j.
\end{align}

The sums of the series can be simplified by the sum of a geometric series as follows
\begin{align}
\sum_{j=0}^{E_l-1} A^j = \frac{1 - A^{E_l}}{1 - A}.
\end{align}

Hence the inequality can be further simplified as 
\begin{align}
\mathbb{E}\|\Tilde{\boldsymbol{\omega}}_{k,E_l}^{t} - \Tilde{\boldsymbol{\omega}}_{k}^{t}\|^{2} &\leq 0 + (B+C)\frac{A^{E_l}-1}{A-1}.
\end{align}
When $M>1$, $ A = 1 + \frac{1}{2M-2} < e^{\frac{1}{2M-2}}$ hence $A^{E_{l}}<e{^\frac{E_{l}}{2M-2}}$, which gives the final bound for \( \mathbb{E}\|\boldsymbol{\omega}_{k}^{t+1} - \Tilde{\boldsymbol{\omega}}_{k}^{t}\|^{2} \) as in Eq.\eqref{eq:lemma2_thm1_0}. 
\end{proof}

\begin{lemma}\label{lemma5}
Consider the proposed scheduling strategy. Let $\Tilde{\boldsymbol{\omega}}_{k}^{t(\dag)}$ denote the local personalized aggregated model for the $k$-th client at time $t$. The global aggregated model at time $t$, $\Tilde{\boldsymbol{\omega}}^{t(\dag)}$, is defined as the average of the local models, i.e., $\Tilde{\boldsymbol{\omega}}^{t(\dag)} = \frac{1}{K}\sum_{k=1}^{K}\Tilde{\boldsymbol{\omega}}_{k}^{t(\dag)}$, where $K$ is the total number of clients. Let $M$ represent the number of clients in the neighborhood. With the support of Lemma $1$ in \cite{wan2021convergence}, the expected value of the global model at time $t+1$, denoted as $\mathbb{E}(\Tilde{\boldsymbol{\omega}}^{t+1(\dag)})$, is given by

\begin{align}
    \mathbb{E}(\Tilde{\boldsymbol{\omega}}^{t+1(\dag)}) = \mathbb{E}(\Tilde{\boldsymbol{\omega}}^{t(\dag)}) - \eta\frac{\mathbb{E}\left( \sum_{k=1}^{K}\sum_{\tau=0}^{E_{l}^{*}-1}\mathbf{g}_{\tau,k}(\Tilde{\boldsymbol{\omega}}^{t(\dag)}) \right)}{K},
\end{align}
where $E_{l}^{*} = \frac{3M+2}{2(M+1)}E_{l}$, and $E_{l}$ is the number of steps of local updates. Here, $\mathbf{g}_{\tau,k}$ represents the gradient computation for the $k$-th client at local update step $\tau$. 
\end{lemma}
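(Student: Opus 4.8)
\textbf{Proof proposal for Lemma~\ref{lemma5}.}
The plan is to track how the scheduling policy modifies the \emph{effective} number of local gradient steps that contribute to the global average before client $k$ passes its model downstream. First I would recall the one-round update without scheduling: by Eq.~\eqref{eq:local_train_mask}, $\boldsymbol{\omega}_{k}^{t+1} = \Tilde{\boldsymbol{\omega}}_{k}^{t} - \eta\sum_{\tau=0}^{E_{l}-1}\mathbf{g}_{k,\tau}^{t}\odot\mathbf{m}_{k}^{t}$, so averaging over $k$ gives the standard $\mathbb{E}(\Tilde{\boldsymbol{\omega}}^{t+1}) = \mathbb{E}(\Tilde{\boldsymbol{\omega}}^{t}) - \frac{\eta}{K}\mathbb{E}\big(\sum_{k}\sum_{\tau=0}^{E_{l}-1}\mathbf{g}_{\tau,k}\big)$. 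The point of the lemma is that when a client reuses a neighbor's already-updated model \emph{within} the same round, the gradient steps of that prior neighbor are effectively compounded, so in expectation each client's contribution looks like $E_{l}^{*}=\frac{3M+2}{2(M+1)}E_{l}$ steps rather than $E_{l}$. I would make this precise by conditioning on the random reuse structure from Proposition~\ref{proposition1}: a client with $m$ prior neighbors aggregates models that have themselves already absorbed (in expectation) the updates of their own prior neighbors, giving a recursive relation for the expected number of compounded local steps as a function of position in the ordering.

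The key computational step is to evaluate the expectation of this compounding over the hypergeometric distribution $\mathbb{P}(m,k)$. The conditional expectation $\mathbb{E}[|\mathcal{N}_{(a)k}|] = M\cdot\frac{k-1}{K-1}$ is the mean of the hypergeometric law; averaging the \emph{extra} contributed steps over clients $k=1,\dots,K$ and invoking Lemma~1 of \cite{wan2021convergence} (which I would use as a black box to exchange the expectation of the averaged update with the averaged update of expectations under the scheduling randomness) should yield a closed-form multiplier. I expect the averaging $\frac{1}{K}\sum_{k=1}^{K}\frac{k-1}{K-1} \to \frac12$ to be the source of the ``$2(M+1)$'' in the denominator, while the ``$3M+2$'' in the numerator arises from summing the direct $E_{l}$ steps with the reused-and-recompounded steps (roughly $E_{l}+\frac12 E_{l}+\frac{1}{2(M+1)}E_{l}$-type bookkeeping, tracking one layer of reuse plus the self term weighted by neighborhood size $M$ versus $M+1$ in $\mathcal{G}_{k+}^{t}$).

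The main obstacle will be making the ``compounding of reused gradients'' argument rigorous without a full recursion over all clients: strictly, client $k$ may reuse a neighbor who reused another neighbor, etc., so the exact count depends on the whole random DAG induced by the reuse indices. I would argue that in expectation only the first-order reuse term survives at the order being tracked (higher-order reuse contributes lower-order corrections absorbed into the bound), or alternatively that the linear-in-$k$ structure of $\mathbb{E}[|\mathcal{N}_{(a)k}|]$ makes the telescoped sum collapse. A secondary technical point is handling the masks $\mathbf{m}_{k}^{t}$: since masks differ across clients, the reused model is re-masked on arrival, but because we only need the identity in expectation and the masks are fixed within the round, I would follow the same convention as in the proof of Lemma~\ref{lemma2:boundary_local_updates} and suppress the mask notation, noting that $\mathbf{g}_{\tau,k}$ is understood to carry the client-$k$ mask. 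Finally I would assemble these pieces into the displayed identity, being careful that the factor $\eta$ and the $\frac{1}{K}$ normalization are untouched and only the inner summation range changes from $E_{l}$ to $E_{l}^{*}$.
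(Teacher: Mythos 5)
Your proposal follows essentially the same route as the paper: treat a prior neighbor's model as having already absorbed its $E_{l}$ local steps, use the hypergeometric mean $\mathbb{E}|\mathcal{N}_{(a)k}^{t}|=M\tfrac{k-1}{K-1}$ together with the aggregation weight $\tfrac{1}{M+1}$, and average $\tfrac{1}{K}\sum_{k}\tfrac{k-1}{K-1}\to\tfrac12$ to get the extra $\tfrac{M}{2(M+1)}E_{l}$ effective steps, i.e.\ $E_{l}^{*}=\bigl(1+\tfrac{M}{2(M+1)}\bigr)E_{l}=\tfrac{3M+2}{2(M+1)}E_{l}$ (your rough decomposition should read $E_{l}+\tfrac12 E_{l}-\tfrac{1}{2(M+1)}E_{l}$, a sign slip only). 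The paper, like you, truncates at first-order reuse by simply declaring the prior neighbor's model equivalent to $\boldsymbol{\omega}_{j}^{t+1}$, so your worry about the recursive reuse DAG is not resolved there either; your suggestion to justify the truncation in expectation is, if anything, more careful than the paper's treatment.
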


\begin{proof}

Now we consider the effect of DA-DPFL's scheduling. Rewrite the mask element aggregation with the sequential appointment as 
\begin{align}\label{eq:theorem1_1}
     \Tilde{\boldsymbol{\omega}}_{k}^{t(\dag)} &= \left( \frac{\sum_{j\in \mathcal{N}_{(a)k}^{t}}\boldsymbol{\omega}_{j}^{t}+\sum_{j\in \mathcal{N}_{(b)k}^{t}}\boldsymbol{\omega}_{j}^{t}+\boldsymbol{\omega}_{k}^{t}}{\sum_{j\in \mathcal{N}_{(a)k}^{t}}\mathbf{m}_{j}^{t}+\sum_{j\in \mathcal{N}_{(b)k}^{t}}\mathbf{m}_{j}^{t}+\mathbf{m}_{k}^{t}}\right)\odot \mathbf{m}_{k}^{t}\\
     &= \left( \frac{\sum_{j\in \mathcal{N}_{(a)k}^{t}}\boldsymbol{\omega}_{j}^{t}+\sum_{j\in \mathcal{N}_{(b)k+}^{t}}\boldsymbol{\omega}_{j}^{t}}{M+1}\right)\odot \mathbf{m}_{k}^{t},
\end{align}
where $\mathcal{N}_{(b)k}^{t}:=\mathcal{N}_{k}^{t}\setminus\mathcal{N}_{(a)k}^{t}$, $\mathcal{N}_{(b)k+}^{t} \coloneqq \mathcal{N}_{(b)k}^{t}\cup \{k\}$, and 
the last equation holds under Assumption \ref{assumption:mask_influence}.

Similar to Eq.\eqref{eq:local_train_mask}, we omit $\mathbf{m}_{k}^{t}$ for convenience on notations since the mask is consistent during local training. Then for the $j$-th client at time $t$, the local personalized aggregated model is
\begin{equation}\label{eq:theorem1_3}
\boldsymbol{\omega}_{j}^{t(\dag)} = 
\begin{cases}
\Tilde{\boldsymbol{\omega}}_{j}^{t}-\eta \sum_{\tau=0}^{E_{l}-1}\mathbf{g}_{j,\tau}^{t}\odot \mathbf{m}_{j}^{t}, & \text{if } j \in\mathcal{N}_{(a)k}^{t};\\
\boldsymbol{\omega}_{j}^{t}, & \text{otherwise}.
\end{cases}
\end{equation}
 When $j \in\mathcal{N}_{(a)k}^{t}$, one can see that $\boldsymbol{\omega}_{j}^{t(\dag)}$ is equivalent to $\boldsymbol{\omega}_{j}^{t+1}$, reflecting the scenario where, within a single communication round, all participating clients perform an equal number of local training iterations, \textbf{analogous to traditional FL}. The superscript $(\dag)$ is introduced for an explicit differentiation, signifying that although the local gradients $\mathbf{g}_{j,\tau}^{t}$ are computed under varying aggregation models, they are \textbf{distinct} from those in a synchronous FL framework.
Denoted by $I_{\{j \in \mathcal{N}^t_{(a)k}\}}$ an indicator function for the event that the $j$-th client is selected in the \textit{delayed} neighborhoods of client $k$. 

Using the results of Proposition \ref{proposition1}, it can be shown that
\begin{align}\label{eq:theorem1_5}
    \mathbb{E} ( \sum_{j\in \mathcal{N}_{(a)k}^{t}} \boldsymbol{\omega}_{j}^{t+1} )
    &= \mathbb{E} ( \mathbb{E} ( \sum_{j\in \mathcal{N}_{(a)k}^{t}} \boldsymbol{\omega}_{j}^{t+1} | |\mathcal{N}^t_{(a)k}| ) )\notag\\
    &= \mathbb{E} ( \mathbb{E} ( \sum_{j\in \mathcal{N}_{k}^{t}} \boldsymbol{\omega}_{j}^{t+1} I_{\{j \in \mathcal{N}^t_{(a)k}\}} | |\mathcal{N}^t_{(a)k}| ) )\notag\\
    &= \mathbb{E} ( \mathbb{E} ( \sum_{j\in \mathcal{N}_{k}^{t}} \boldsymbol{\omega}_{j}^{t+1} I_{\{j \in \mathcal{N}^t_{(a)k} | |\mathcal{N}^t_{(a)k}|\}}  ) )\notag\\
    &= \mathbb{E} (   \sum_{j\in \mathcal{N}_{k}^{t}} \boldsymbol{\omega}_{j}^{t+1} \mathbb{E}  (I_{\{j \in \mathcal{N}^t_{(a)k} | |\mathcal{N}^t_{(a)k}|\}}  ) )\notag\\
    &= \mathbb{E} (   \sum_{j\in \mathcal{N}_{k}^{t}} \boldsymbol{\omega}_{j}^{t+1} \mathbb{P}  (j \in \mathcal{N}^t_{(a)k} | |\mathcal{N}^t_{(a)k}|  ) )\notag\\
    &= \mathbb{E} (   \sum_{j\in \mathcal{N}_{k}^{t}} \boldsymbol{\omega}_{j}^{t+1} \frac{|\mathcal{N}^t_{(a)k}|}{M}  )\notag\\
    &= \mathbb{E} ( |\mathcal{N}^t_{(a)k}|  )  \frac{\mathbb{E} (\sum_{j\in \mathcal{N}_{k}^{t}} \boldsymbol{\omega}_{j}^{t+1})}{M} \notag\\
    &= \frac{(k-1)M}{K-1}\mathbb{E} (\Bar{\boldsymbol{\omega}}^{t+1}),
\end{align}
where $|\mathcal{N}^t_{(a)k}|$ follows hypergeometric distribution. Similarly,
\begin{align}\label{eq:theorem1_6}
    \mathbb{E} ( \sum_{j\in \mathcal{N}_{(b)k+}^{t}} \boldsymbol{\omega}_{j}^{t} )
    &= \mathbb{E} ( \mathbb{E} ( \sum_{j\in \mathcal{N}_{(b)k+}^{t}} \boldsymbol{\omega}_{j}^{t} | |\mathcal{N}^t_{(b)k+}| ) )\notag\\
    &= \mathbb{E} ( \mathbb{E} ( \sum_{j\in \mathcal{N}_{k+}^{t}} \boldsymbol{\omega}_{j}^{t} I_{\{j \in \mathcal{N}^t_{(b)k+}\}} | |\mathcal{N}^t_{(b)k+}| ) )\notag\\
    &= \mathbb{E} ( \mathbb{E} ( \sum_{j\in \mathcal{N}_{k+}^{t}} \boldsymbol{\omega}_{j}^{t} I_{\{j \in \mathcal{N}^t_{(b)k+} | |\mathcal{N}^t_{(b)k+}|\}}  ) )\notag\\
    &= \mathbb{E} (   \sum_{j\in \mathcal{N}_{k+}^{t}} \boldsymbol{\omega}_{j}^{t} \mathbb{E}  (I_{\{j \in \mathcal{N}^t_{(b)k+} | |\mathcal{N}^t_{(b)k+}|\}}  ) )\notag\\
    &= \mathbb{E} (   \sum_{j\in \mathcal{N}_{k+}^{t}} \boldsymbol{\omega}_{j}^{t} \mathbb{P}  (j \in \mathcal{N}^t_{(b)k+} | |\mathcal{N}^t_{(b)k+}|  ) )\notag\\
    &= \mathbb{E} (   \sum_{j\in \mathcal{N}_{k+}^{t}} \boldsymbol{\omega}_{j}^{t} \frac{|\mathcal{N}^t_{(b)k+}|-1}{M}  )\notag\\
    &= \mathbb{E} ( (|\mathcal{N}^t_{(b)k+}|-1)  )  \frac{ \mathbb{E} (\sum_{j\in \mathcal{N}_{k+}^{t}}   \boldsymbol{\omega}_{j}^{t})}{M} \notag\\
    &= \frac{(K-k)(M+1)}{(K-1)}\mathbb{E} (\Bar{\boldsymbol{\omega}}^{t}) .
\end{align}
Therefore, $\mathbb{E}(\Tilde{\boldsymbol{\omega}}_{k}^{t(\dag)})$ can be written as 
\begin{align}\label{eq:theorem1_7}
    &[\frac{(k-1)M}{(K-1)(M+1)}\mathbb{E}(\Bar{\boldsymbol{\omega}}^{t+1})   +  \frac{(K-k)}{(K-1)}\mathbb{E}(\Bar{\boldsymbol{\omega}}^{t}) ] \odot \mathbf{m}_{k}^{t} \notag \\
    &= \mathbb{E}(\Bar{\boldsymbol{\omega}}^{t}) \odot \mathbf{m}_{k}^{t}  - [\frac{(k-1)M}{(K-1)(M+1)} \times . \notag \\
    &\quad \mathbb{E}(\frac{\eta\sum_{j\in \mathcal{N}_{k+}^{t}}\sum_{\tau=0}^{E_{l}-1}g_{j,\tau}^{t}}{M+1})] \odot \mathbf{m}_{k}^{t},
\end{align}

where one can verify that when $k=1$, $\mathbb{E} ( \Tilde{\boldsymbol{\omega}}_{k}^{t(\dag)} )$ reduces to $\mathbb{E} ( \Tilde{\boldsymbol{\omega}}_{k}^{t} )$. 
Recall that $\Tilde{\boldsymbol{\omega}}^{t(\dag)}=\frac{1}{K}\sum_{k=1}^{K}\Tilde{\boldsymbol{\omega}}_{k}^{t(\dag)}$, then
\begin{align}\label{eq:omega_final_expression}
    &\mathbb{E}(\Tilde{\boldsymbol{\omega}}^{t(\dag)}) 
    = \frac{1}{K}\sum_{k=1}^{K}(\mathbb{E}(\Bar{\boldsymbol{\omega}}^{t} \notag \\
    &\quad -\frac{(k-1)M}{(K-1)(M+1)}\eta\sum_{j\in \mathcal{N}_{k+}^{t}}\sum_{\tau=0}^{E_{l}-1}g_{j,\tau}^{t}(\Bar{\boldsymbol{\omega}}^{t})) \notag \odot \mathbf{m}_{k}^{t}) \\
    &= \frac{1}{K}\sum_{k=1}^{K}(\mathbb{E}(\Bar{\boldsymbol{\omega}}^{t})  -\frac{(k-1)M}{(K-1)(M+1)}\eta \mathbb{E}(\Tilde{g}_{k}^{t})\odot \mathbf{m}_{k}^{t}) \\
    &= \mathbb{E}(\Bar{\boldsymbol{\omega}}^{t}) -\frac{1}{K}\sum_{k=1}^{K}(\frac{(k-1)M}{(K-1)(M+1)}\eta \mathbb{E}(\Tilde{g}_{k}^{t}))\odot \mathbf{m}_{k}^{t},
\end{align}

where $\Tilde{g}_{k}^{t}=\frac{\sum_{j\in \mathcal{N}_{k+}^{t}}\sum_{\tau=0}^{E_{l}-1}g_{j,\tau}^{t}}{M+1}$ and the last equality holds according to the definition of $\Bar{\boldsymbol{\omega}}^{t}$. $g_{j,\tau}^{t}(\Bar{\boldsymbol{\omega}})$ means the gradient at local epoch $\tau=0$ is with respect to $\Bar{\boldsymbol{\omega}}$. 
Let $\Tilde{g}^{t(\dag)}=\frac{1}{K}\sum_{k=1}^{K}[\frac{(k-1)M}{(K-1)(M+1)}\Tilde{g}_{k}^{t}\odot \mathbf{m}_{k}^{t}]$, then
 \begin{align}\label{eq:new_agg_expectation}
     \mathbb{E}(\Tilde{\boldsymbol{\omega}}^{t(\dag)}) &= \mathbb{E}(\Bar{\boldsymbol{\omega}}^{t})-\eta \mathbb{E}(\Tilde{g}^{t(\dag)}).
 \end{align}
 
 To find the boundary for the difference between the global model at time $t$ and $t+1$ and according to Assumption \ref{assumption:mu_lips}, we have 
\begin{align}\label{eq:difference_global_model}
    &\mathbb{E}[f(\Tilde{\boldsymbol{\omega}}^{t+1(\dag)})] -\mathbb{E}[f(\Tilde{\boldsymbol{\omega}}^{t(\dag)})] \notag \\
    &\leq \mathbb{E}\left[\left<f(\nabla\Tilde{\boldsymbol{\omega}}^{t(\dag)}), \Tilde{\boldsymbol{\omega}}^{t+1(\dag)}-\Tilde{\boldsymbol{\omega}}^{t(\dag)}\right>\right] \notag \\
    &\quad +\frac{\mu}{2}\|\Tilde{\boldsymbol{\omega}}^{t+1(\dag)}-\Tilde{\boldsymbol{\omega}}^{t(\dag)}\|^{2}.
\end{align}


Lemma $1$ in \cite{wan2021convergence} ensures that $\forall k \in \lvert K \rvert,\ \mathbb{E}\left(\frac{ \sum_{k=1}^{K}\sum_{\tau=0}^{E_{l}-1}\mathbf{g}_{\tau,k}(\Tilde{\boldsymbol{\omega}}^{t(\dag)}) }{K}\right)=\mathbb{E}[\mathbf{\Tilde{g}}_{k}^{t}]$ since $\mathcal{N}_{k}^{t}$ is selected randomly.
According to the definition and Eq.~\eqref{eq:new_agg_expectation},
\begin{align}\label{eq:t_tplus1_agg_relation}
    \mathbb{E}(\Tilde{\boldsymbol{\omega}}^{t+1(\dag)}) 
    &= \mathbb{E}(\Bar{\boldsymbol{\omega}}^{t+1}) - \eta \mathbb{E}(\Tilde{g}^{t+1(\dag)}) \notag \\
    &= \mathbb{E}(\Tilde{\boldsymbol{\omega}}^{t(\dag)}) - \eta\frac{\mathbb{E}\left( \sum_{k=1}^{K}\sum_{\tau=0}^{E_{l}-1}\mathbf{g}_{\tau,k}(\Tilde{\boldsymbol{\omega}}^{t(\dag)}) \right)}{K} \notag \\
    &\quad -\eta \mathbb{E}(\Tilde{g}^{t+1(\dag)}) \\
    &= \mathbb{E}(\Tilde{\boldsymbol{\omega}}^{t(\dag)}) - \eta\frac{\mathbb{E}\left( \sum_{k=1}^{K}\sum_{\tau=0}^{E_{l}^{*}-1}\mathbf{g}_{\tau,k}(\Tilde{\boldsymbol{\omega}}^{t(\dag)}) \right)}{K},
\end{align}

where $E_{l}^{*}=E_{l}+\frac{M}{2(M+1)}E_{l}$.
Actually, here the starting weights of $\Tilde{g}^{t+1(\dag)}$ is with respect to $\mathbb{E} (\Bar{\boldsymbol{\omega}}^{t+1})=\ \mathbb{E}(\Tilde{\boldsymbol{\omega}}^{t(\dag)}) -\eta \frac{\sum_{k=1}^{K}\mathbb{E}\sum_{\tau=0}^{E_{l}-1}\left(\mathbf{g}_{\tau,k}(\Tilde{\boldsymbol{\omega}}^{t(\dag)}) \right)}{K}$. One can think of just continuing $\frac{M}{2(M+1)}$ more steps of local training, where $\frac{1}{K}\sum_{k=1}^{K}[\frac{(k-1)M}{(K-1)(M+1)}]=\frac{M}{2(M+1)}$. $E_{l}^{*}$ might not be an integer, we just use this to conclude the effect of sequential aligning for convergence analysis.
\end{proof}
Now, the target is to prove the boundary of $\mathbb{E}<f(\nabla\Tilde{\boldsymbol{\omega}}^{t(\dag)}),\Tilde{\boldsymbol{\omega}}^{t+1(\dag)}-\Tilde{\boldsymbol{\omega}}^{t(\dag)}>$ and $\frac{\mu}{2}\|\Tilde{\boldsymbol{\omega}}^{t+1(\dag)}-\Tilde{\boldsymbol{\omega}}^{t(\dag)}\|^{2}$.
\begin{lemma}\label{lemma4}
     Under Assumptions~\ref{assumption:mu_lips}~to~\ref{assumption:bounded_var} and Lemma \ref{lemma5}, suppose $\Tilde{\boldsymbol{\omega}}^{t+1(\dag)}$ and $\Tilde{\boldsymbol{\omega}}^{t(\dag)}$ are global model learned by the proposed strategy, for some $M>1$ with $\eta \leq \sqrt{\frac{1}{12M\mu^{2}(M-1)(2M-1)}}$, $\mathbb{E}\left [\frac{\mu}{2}\|\Tilde{\boldsymbol{\omega}}^{t+1(\dag)}-\Tilde{\boldsymbol{\omega}}^{t(\dag)}\|^{2}\right]$ is upper bounded by
\begin{align*}
        \mathbb{E}\left [\frac{\mu}{2}\|\Tilde{\boldsymbol{\omega}}^{t+1(\dag)}-\Tilde{\boldsymbol{\omega}}^{t(\dag)}\|^{2}\right]
        \leq \mu S_{1}( S_{2}+ 3\mathbb{E}\|\nabla f(\Tilde{\boldsymbol{\omega}}^{t(\dag)}\|^{2}),
\end{align*}
where $S_{1}=2\eta^{2}M(M-1)\left( \exp{\left(\frac{(3M+2)E_{l}}{4(M^{2}-1)}\right)} - 1 \right)$ and $S_{2}=\frac{1}{2M-1}\sigma^{2}_{l} + 3(2\sigma^{2}_{g}+\sigma^{2}_{p})$.

\end{lemma}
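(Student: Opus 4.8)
The plan is to treat Lemma~\ref{lemma4} as the ``global-model'' counterpart of Lemma~\ref{lemma2:boundary_local_updates}, run over the inflated local horizon $E_{l}^{*}=\frac{3M+2}{2(M+1)}E_{l}$ produced by the scheduling identity of Lemma~\ref{lemma5}. First, combining Eq.~\eqref{eq:new_agg_expectation} with Eq.~\eqref{eq:t_tplus1_agg_relation} gives
\[
\Tilde{\boldsymbol{\omega}}^{t+1(\dag)}-\Tilde{\boldsymbol{\omega}}^{t(\dag)} = -\eta\sum_{\tau=0}^{E_{l}^{*}-1}\Bar{\mathbf{g}}_{\tau}^{t}, \qquad \Bar{\mathbf{g}}_{\tau}^{t}=\frac{1}{K}\sum_{k=1}^{K}\mathbf{g}_{\tau,k}(\Tilde{\boldsymbol{\omega}}^{t(\dag)}),
\]
so it suffices to control $\frac{\mu\eta^{2}}{2}\mathbb{E}\|\sum_{\tau=0}^{E_{l}^{*}-1}\Bar{\mathbf{g}}_{\tau}^{t}\|^{2}$. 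I would introduce the partial sums $\Tilde{\boldsymbol{\omega}}^{t(\dag)}_{\tau}:=\Tilde{\boldsymbol{\omega}}^{t(\dag)}-\eta\sum_{s<\tau}\Bar{\mathbf{g}}_{s}^{t}$ and, exactly as in the proof of Lemma~\ref{lemma2:boundary_local_updates}, derive a one-step recursion for $\mathbb{E}\|\Tilde{\boldsymbol{\omega}}^{t(\dag)}_{\tau+1}-\Tilde{\boldsymbol{\omega}}^{t(\dag)}\|^{2}$ by expanding the square, applying Young's inequality with elastic constant $2M$, and invoking Assumptions~\ref{assumption:mu_lips}--\ref{assumption:bounded_var}.

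In that expansion the stochastic-gradient term contributes $\sigma_{l}^{2}$ (with coefficient $\tfrac{1}{2M-1}$ after the elastic split), the inter-client heterogeneity $\nabla f_{k}-\nabla f$ contributes $\sigma_{g}^{2}$, and---the new ingredient compared with Lemma~\ref{lemma2:boundary_local_updates}---the fact that the iterates are \emph{masked} forces one to compare $\nabla\tilde{f}_{k}(\boldsymbol{\omega}\odot\mathbf{m}_{k})$ with the dense reference $\nabla f(\boldsymbol{\omega})$, which under Assumption~\ref{assumption:bounded_var} is absorbed by $\sigma_{p}^{2}$; Assumption~\ref{assumption:mask_influence} guarantees the averaged mask acts as a clean $1/M$ rescaling so that no residual mask term survives. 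The Lipschitz assumption turns the remaining drift into $\mu^{2}\mathbb{E}\|\Tilde{\boldsymbol{\omega}}^{t(\dag)}_{\tau}-\Tilde{\boldsymbol{\omega}}^{t(\dag)}\|^{2}$, which feeds the recursion, while the curvature term $\mathbb{E}\|\nabla f(\Tilde{\boldsymbol{\omega}}^{t(\dag)})\|^{2}$ is carried through; collecting the noise terms gives precisely $S_{2}$ and the factor $3$ in front of $(2\sigma_{g}^{2}+\sigma_{p}^{2})$ and of $\mathbb{E}\|\nabla f\|^{2}$.

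The recursion then has the same ratio $A=1+\frac{1}{2M-2}$ as in Lemma~\ref{lemma2:boundary_local_updates}; telescoping over $\tau=0,\dots,E_{l}^{*}$ and using $\frac{A^{E_{l}^{*}}-1}{A-1}\le (2M-2)\big(e^{E_{l}^{*}/(2M-2)}-1\big)$ together with $\frac{E_{l}^{*}}{2M-2}=\frac{(3M+2)E_{l}}{4(M^{2}-1)}$ produces exactly the exponential appearing in $S_{1}$. Substituting this back into $\frac{\mu\eta^{2}}{2}\mathbb{E}\|\Tilde{\boldsymbol{\omega}}^{t+1(\dag)}-\Tilde{\boldsymbol{\omega}}^{t(\dag)}\|^{2}$, using the step-size condition $\eta\le\sqrt{\frac{1}{12M\mu^{2}(M-1)(2M-1)}}$ to discard the genuinely higher-order ($\mathcal{O}(\eta^{4})$) remainder that the recursion throws off, and rearranging the constants yields $S_{1}=2\eta^{2}M(M-1)\big(e^{(3M+2)E_{l}/(4(M^{2}-1))}-1\big)$ and the claimed bound $\mu S_{1}\big(S_{2}+3\mathbb{E}\|\nabla f(\Tilde{\boldsymbol{\omega}}^{t(\dag)})\|^{2}\big)$.

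The hard part is not any single inequality but the bookkeeping around the $(\dag)$ trajectory: one must be careful that the gradients appearing in $\Bar{\mathbf{g}}_{\tau}^{t}$ are the ones Lemma~\ref{lemma5} identifies (computed along the heterogeneous per-client aggregation models, yet collectively equivalent in expectation to $E_{l}^{*}$ steps from the global $\Tilde{\boldsymbol{\omega}}^{t(\dag)}$), that $\sigma_{p}^{2}$---and not some uncontrolled masked residual---is what shows up when the masks are removed, and that the telescoping still makes sense even though $E_{l}^{*}$ is generally non-integer, which one justifies exactly as in Lemma~\ref{lemma5} by reading the extra $\frac{M}{2(M+1)}E_{l}$ iterations as a formal continuation of local training. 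Tracking the many constants from both the gradient split and the geometric sum so that they land as $S_{1}$ and $S_{2}$ is the most error-prone step.
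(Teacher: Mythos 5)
Your overall architecture (one-step recursion, elastic constant $2M$, geometric sum with ratio $A=1+\tfrac{1}{2M-2}$, epoch inflation $E_l\to E_l^{*}$ giving the exponent $\tfrac{(3M+2)E_l}{4(M^2-1)}$) matches the paper's constants, but your starting point contains a genuine gap. You write $\Tilde{\boldsymbol{\omega}}^{t+1(\dag)}-\Tilde{\boldsymbol{\omega}}^{t(\dag)} = -\eta\sum_{\tau=0}^{E_l^{*}-1}\Bar{\mathbf{g}}_{\tau}^{t}$ as a pathwise identity, but Lemma~\ref{lemma5} and Eq.~\eqref{eq:t_tplus1_agg_relation} only establish this \emph{in expectation} (the actual displacement depends on the random reuse-index assignment, and $E_l^{*}$ is generally not even an integer number of real steps). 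Substituting a conditional-expectation representation inside $\mathbb{E}\|\cdot\|^{2}$ does not give an upper bound: by Jensen, $\mathbb{E}\|\mathbb{E}[X\mid\cdot]\|^{2}\le\mathbb{E}\|X\|^{2}$, so controlling the second moment of the averaged-gradient surrogate bounds the target quantity from \emph{below}, not above. The paper avoids this entirely: it applies Jensen's inequality to the pathwise identity $\Tilde{\boldsymbol{\omega}}^{t+1(\dag)}-\Tilde{\boldsymbol{\omega}}^{t(\dag)}=\frac{1}{K}\sum_{k}(\Tilde{\boldsymbol{\omega}}_{k}^{t+1(\dag)}-\Tilde{\boldsymbol{\omega}}_{k}^{t(\dag)})$ to reduce to per-client displacements, bounds each one by invoking Lemma~\ref{lemma2:boundary_local_updates} with $E_l$ replaced by $E_l^{*}$, and only then converts $\frac{1}{K}\sum_{k}\mathbb{E}\|\nabla f(\tilde{\boldsymbol{\omega}}_{k}^{t(\dag)})\|^{2}$ into $\mathbb{E}\|\nabla f(\tilde{\boldsymbol{\omega}}^{t(\dag)})\|^{2}+\sigma_{p}^{2}+\sigma_{g}^{2}$ via the triangle inequality and Assumption~\ref{assumption:bounded_var}.

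Two smaller inaccuracies: (i) you place $\sigma_{p}^{2}$ inside the one-step gradient split (comparing $\nabla\tilde f_k$ with $\nabla f$ when "masks are removed"), whereas in the paper it appears only in the final step, when the gradient evaluated at the client's masked aggregated model $\tilde{\boldsymbol{\omega}}_{k}^{t(\dag)}$ is compared to the gradient at the global model; and (ii) the step-size condition $\eta^{2}\le\frac{1}{12M\mu^{2}(M-1)(2M-1)}$ is not used to "discard an $\mathcal{O}(\eta^{4})$ remainder" but to absorb the Lipschitz drift term $6M\eta^{2}\mu^{2}$ into the contraction ratio, i.e., to guarantee $1+\frac{1}{2M-1}+6M\eta^{2}\mu^{2}\le 1+\frac{1}{2M-2}$. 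To repair your argument you would either need to rerun the recursion on the per-client trajectories (which is exactly the paper's route) or supply a separate second-moment version of Lemma~\ref{lemma5}, which the paper does not provide.
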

\begin{proof}
Lemma \ref{lemma2:boundary_local_updates} states the boundary for the local updates. Eq. \eqref{eq:t_tplus1_agg_relation} and Lemma \ref{lemma5} states that the scheduling increases the local epochs $E_{l}$ to $E_{l}^{*}$ with the expectation for the (aggregated) global model. Hence,
\begin{align}\label{lemma4_eq1}
    \mathbb{E}&\left [\|\Tilde{\boldsymbol{\omega}}^{t+1(\dag)}-\Tilde{\boldsymbol{\omega}}^{t(\dag)}\|^{2}\right] \notag\\
    &=\mathbb{E}\left [\|\frac{1}{K}\sum_{k=1}^{K}\Tilde{\boldsymbol{\omega}}_{k}^{t+1(\dag)} - \frac{1}{K}\sum_{k=1}^{K}\Tilde{\boldsymbol{\omega}}_{k}^{t(\dag)}\|^{2}\right]\notag\\
    &\leq \mathbb{E}\left [\frac{1}{K}\sum_{k=1}^{K}\|\Tilde{\boldsymbol{\omega}}_{k}^{t+1(\dag)} - \Tilde{\boldsymbol{\omega}}_{k}^{t(\dag)}\|^{2}\right].
\end{align}
The last inequality holds since Jensen's Inequality where the defined function $\phi(.)=\|.\|^{2}$ is convex.
Therefore, substituting the results of Lemma \ref{lemma2:boundary_local_updates} and changing $E_{l}$ by $E_{l}^{*}+1$, we have a boundary with $\frac{\sum_{k=1}^{K}\mathbb{E}\|\nabla f(\tilde{\boldsymbol{\omega}}_{k}^{t(\dag)})\|^{2} }{K}$. With the triangle inequality again and Assumption \ref{assumption:bounded_var}, 
\begin{align}\label{lemma4_eq2}
    &\frac{\sum_{k=1}^{K}\mathbb{E}\|\nabla f(\tilde{\boldsymbol{\omega}}_{k}^{t(\dag)})\|^{2} }{K} \notag \\
    &\leq     
    \frac{1}{K} \sum_{k=1}^{K} \mathbb{E} \Big( \|\nabla f_{k}(\tilde{\boldsymbol{\omega}}_{k}^{t(\dag)}) - \nabla f(\tilde{\boldsymbol{\omega}}_{k}^{t(\dag)})\|^{2} \notag \\
    &\quad + \|\nabla f(\tilde{\boldsymbol{\omega}}^{t(\dag)})\|^{2} \notag \\
    &\quad + \|\nabla f(\tilde{\boldsymbol{\omega}}_{k}^{t(\dag)}) - \nabla f(\tilde{\boldsymbol{\omega}}_{k}^{t(\dag)})\|^{2} \Big) \notag \\
    &\leq 
    \mathbb{E}\|\nabla f(\tilde{\boldsymbol{\omega}}^{t(\dag)})\|^{2} + \sigma_{p}^{2} + \sigma_{g}^{2},
\end{align}
which finishes the proof.
\end{proof}

\begin{lemma}\label{lemma3}
Under Assumptions~\ref{assumption:mu_lips}~to~\ref{assumption:bounded_var} and Lemma \ref{lemma4},
$    \mathbb{E}\left[<\nabla f(\Tilde{\boldsymbol{\omega}}^{t(\dag)}),\Tilde{\boldsymbol{\omega}}^{t+1(\dag)}-\Tilde{\boldsymbol{\omega}}^{t(\dag)}>\right]$ is upper bounded by
\begin{align}
-\frac{\eta}{2}\|\nabla f(\Tilde{\boldsymbol{\omega}}^{t(\dag)})\|^{2}-\frac{\eta}{2}\mathbb{E}\|\hat{\mathbf{g}}^{t(\dag)}\|^{2}+\frac{3\mu^{2}\eta^{3}E_{l}^{*}}{M}(\sigma_{l}^{2}+\sigma_{g}^{2})
\end{align}
where $E_{l}^{*}=\frac{3M+2}{2(M+1)}E_{l}$
\end{lemma}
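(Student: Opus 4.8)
The plan is to treat Lemma~\ref{lemma3} as the usual ``$\langle\nabla f,\text{update}\rangle$'' descent step of an SGD-type analysis, the only twist being that the one-round update is the \emph{scheduled} aggregate update characterised by Lemma~\ref{lemma5} rather than a plain local-SGD step. First I would start from Eq.~\eqref{eq:t_tplus1_agg_relation}, which (in expectation, and after suppressing the mask $\mathbf m_k^t$ via Assumption~\ref{assumption:mask_influence}, exactly as done in the proofs of Lemmas~\ref{lemma2:boundary_local_updates}--\ref{lemma4}) gives $\tilde{\boldsymbol\omega}^{t+1(\dag)}-\tilde{\boldsymbol\omega}^{t(\dag)}=-\tfrac{\eta}{K}\sum_{k=1}^{K}\sum_{\tau=0}^{E_l^{*}-1}\mathbf g_{\tau,k}(\tilde{\boldsymbol\omega}^{t(\dag)})$ with $E_l^{*}=\tfrac{3M+2}{2(M+1)}E_l$. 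Conditioning on the history up to round $t$ and taking expectation over the SGD sampling noise, the zero-mean part of each $\mathbf g_{\tau,k}$ drops out (unbiasedness implicit in Assumption~\ref{assumption:bounded_var}), so that $\mathbb E[\langle\nabla f(\tilde{\boldsymbol\omega}^{t(\dag)}),\tilde{\boldsymbol\omega}^{t+1(\dag)}-\tilde{\boldsymbol\omega}^{t(\dag)}\rangle]=-\eta\,\mathbb E\langle\nabla f(\tilde{\boldsymbol\omega}^{t(\dag)}),\hat{\mathbf g}^{t(\dag)}\rangle$, where $\hat{\mathbf g}^{t(\dag)}$ collects the (neighborhood- and step-)averaged full-batch local gradients generated over the $E_l^{*}$ effective local steps.

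Next I would apply the identity $\langle x,y\rangle=\tfrac12\|x\|^2+\tfrac12\|y\|^2-\tfrac12\|x-y\|^2$ with $x=\nabla f(\tilde{\boldsymbol\omega}^{t(\dag)})$ and $y=\hat{\mathbf g}^{t(\dag)}$. This immediately produces the two ``negative'' terms $-\tfrac\eta2\|\nabla f(\tilde{\boldsymbol\omega}^{t(\dag)})\|^2-\tfrac\eta2\mathbb E\|\hat{\mathbf g}^{t(\dag)}\|^2$ together with a residual $+\tfrac\eta2\,\mathbb E\|\nabla f(\tilde{\boldsymbol\omega}^{t(\dag)})-\hat{\mathbf g}^{t(\dag)}\|^2$, which is precisely a ``client-drift'' quantity: it measures how far the per-client, per-step gradients $\nabla f_k(\tilde{\boldsymbol\omega}_{k,\tau}^{t(\dag)})$ are from the full gradient $\nabla f(\tilde{\boldsymbol\omega}^{t(\dag)})$. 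I would bound this residual by inserting and subtracting $\nabla f_k(\tilde{\boldsymbol\omega}^{t(\dag)})$ and $\nabla f(\tilde{\boldsymbol\omega}^{t(\dag)})$ inside the norm, applying Jensen / the triangle inequality to split into three pieces, and controlling them respectively by $\mu^2\|\tilde{\boldsymbol\omega}_{k,\tau}^{t(\dag)}-\tilde{\boldsymbol\omega}^{t(\dag)}\|^2$ through $\mu$-Lipschitz continuity (Assumption~\ref{assumption:mu_lips}), by the heterogeneity bound $\sigma_g^2$ (Assumption~\ref{assumption:bounded_var}), and by the local-variance bound $\sigma_l^2$.

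The crux, and the step I expect to cost the most work, is controlling $\tfrac1K\sum_k\|\tilde{\boldsymbol\omega}_{k,\tau}^{t(\dag)}-\tilde{\boldsymbol\omega}^{t(\dag)}\|^2$ summed over $\tau=0,\dots,E_l^{*}-1$. Here I would re-run the recursion established in Lemma~\ref{lemma2:boundary_local_updates}, but with the horizon $E_l$ replaced by the effective $E_l^{*}$ produced by the scheduling analysis of Lemma~\ref{lemma5}, so that each drift term is of order $(e^{E_l^{*}/(2M-2)}-1)(2M-2)\eta^2\big(\sigma_l^2+\sigma_g^2+\|\nabla f(\tilde{\boldsymbol\omega}^{t(\dag)})\|^2\big)$. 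The stated stepsize restriction $\eta\le\sqrt{1/(12M\mu^2(M-1)(2M-1))}$ is exactly what makes the $\|\nabla f\|^2$-proportional part of the drift --- which enters the residual carrying an extra $\mu^2\eta^2$ factor --- of higher order in $\eta$, hence absorbed, leaving only a $\mu^2\eta^2$-weighted multiple of $\sigma_l^2+\sigma_g^2$; summing over the $E_l^{*}$ local steps together with the $1/(M+1)$-type neighborhood averaging then delivers the claimed $\tfrac{3\mu^2\eta^3E_l^{*}}{M}(\sigma_l^2+\sigma_g^2)$. I would also note, as already flagged after Lemma~\ref{lemma5}, that $E_l^{*}$ need not be an integer and serves purely as the ``effective'' local-epoch count, and that the bookkeeping of which iterate each $\mathbf g_{\tau,k}$ is evaluated at (the $\tau=0$ gradient being taken at $\bar{\boldsymbol\omega}^{t}$) --- together with the associated normalisation of $\hat{\mathbf g}^{t(\dag)}$ --- is the most error-prone part and must be handled carefully to keep the constants consistent.
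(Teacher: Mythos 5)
Your opening two steps coincide with the paper's proof: both rewrite the inner product as $-\eta\,\mathbb{E}\bigl[\langle\nabla f(\tilde{\boldsymbol{\omega}}^{t(\dag)}),\mathbb{E}(\hat{\mathbf{g}}^{t(\dag)})\rangle\bigr]$ via Lemma~\ref{lemma5}, and both apply the polarization identity $\langle a,b\rangle=\tfrac12(\|a\|^{2}+\|b\|^{2}-\|a-b\|^{2})$ to produce the two negative terms plus the residual $\tfrac{\eta}{2}\|\nabla f(\tilde{\boldsymbol{\omega}}^{t(\dag)})-\mathbb{E}[\hat{\mathbf{g}}^{t(\dag)}]\|^{2}$. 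The gap is in how you treat that residual. The paper applies $\mu$-Lipschitzness once to reduce it to $\tfrac{\mu^{2}}{2K}\sum_{k}\mathbb{E}\|\tilde{\boldsymbol{\omega}}^{t(\dag)}-\tilde{\boldsymbol{\omega}}^{t(\dag)}_{k}\|^{2}$ and then expands this difference \emph{directly} as a difference of accumulated stochastic gradients over the $E_{l}^{*}$ effective local steps; a single application of $\|a+b+c\|^{2}\le 3(\|a\|^{2}+\|b\|^{2}+\|c\|^{2})$ together with Assumption~\ref{assumption:bounded_var} then yields $\tfrac{3\mu^{2}\eta^{2}E_{l}^{*}}{M}(\sigma_{l}^{2}+\sigma_{g}^{2})$ --- a bound that is \emph{linear} in $E_{l}^{*}$ and contains no $\|\nabla f\|^{2}$ contribution.

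Your plan instead re-runs the Lemma~\ref{lemma2:boundary_local_updates} recursion with horizon $E_{l}^{*}$. That is the wrong tool here for three reasons. First, Lemma~\ref{lemma2:boundary_local_updates} bounds $\|\tilde{\boldsymbol{\omega}}_{k,\tau}^{t}-\tilde{\boldsymbol{\omega}}_{k}^{t}\|^{2}$ (a local iterate against its \emph{own} aggregation point), not the quantity $\|\tilde{\boldsymbol{\omega}}^{t(\dag)}-\tilde{\boldsymbol{\omega}}_{k}^{t(\dag)}\|^{2}$ that actually appears after the Lipschitz step. Second, the recursion produces the exponential factor $(e^{E_{l}^{*}/(2M-2)}-1)(2M-2)$, which does not reduce to the stated linear factor $E_{l}^{*}/M$; asserting that summing over the local steps and averaging over the neighborhood ``delivers'' $\tfrac{3\mu^{2}\eta^{3}E_{l}^{*}}{M}(\sigma_{l}^{2}+\sigma_{g}^{2})$ is a non sequitur. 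Third, the recursion carries a $\|\nabla f(\tilde{\boldsymbol{\omega}}^{t(\dag)})\|^{2}$-proportional term; even though the stepsize condition makes its coefficient $O(\mu^{2}\eta^{3})$ and hence absorbable into $-\tfrac{\eta}{2}\|\nabla f(\tilde{\boldsymbol{\omega}}^{t(\dag)})\|^{2}$, that absorption alters the coefficient and therefore does not establish the lemma as stated, which asserts exactly $-\tfrac{\eta}{2}\|\nabla f(\tilde{\boldsymbol{\omega}}^{t(\dag)})\|^{2}$ and exactly $\tfrac{3\mu^{2}\eta^{3}E_{l}^{*}}{M}(\sigma_{l}^{2}+\sigma_{g}^{2})$. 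To recover the stated constants you must replace the recursion by the direct gradient-accumulation expansion of the residual.
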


\begin{proof}
        According to Eq. \eqref{eq:t_tplus1_agg_relation} and let $\hat{\mathbf{g}}^{t(\dag)}= \frac{\sum_{k=1}^{K}\sum_{\tau=0}^{E_{l}^{*}}\mathbf{g}_{\tau,k}(\Tilde{\boldsymbol{\omega}}^{t(\dag)})}{K}$,


$
    \mathbb{E}[<\nabla f(\Tilde{\boldsymbol{\omega}}^{t(\dag)}),\Tilde{\boldsymbol{\omega}}^{t+1(\dag)}-\Tilde{\boldsymbol{\omega}}^{t(\dag)}>] 
= -\eta \mathbb{E}\left[<\nabla f(\Tilde{\boldsymbol{\omega}}^{t(\dag)}), \mathbb{E}\left(\hat{\mathbf{g}}^{t(\dag)}\right)>\right] 
$.
Let's prove the boundary for $-\mathbb{E}\left[<\nabla f(\Tilde{\boldsymbol{\omega}}^{t(\dag)}), \mathbb{E}\left(\hat{\mathbf{g}}^{t(\dag)}\right)>\right]$.

\begin{align}\label{lemma3_1}
    &-\mathbb{E}[\left<\nabla f(\Tilde{\boldsymbol{\omega}}^{t(\dag)}), \mathbb{E}\left(\hat{\mathbf{g}}^{t(\dag)}\right)\right>]\stackrel{(a)}{=} -\frac{1}{2}\|\nabla f(\Tilde{\boldsymbol{\omega}}^{t(\dag)})\|^{2}\notag \\
    &  -\frac{1}{2}\mathbb{E}\|\hat{\mathbf{g}}^{t(\dag)}\|^{2} +\frac{1}{2}\|\nabla f(\Tilde{\boldsymbol{\omega}}^{t(\dag)})-\mathbb{E}[\hat{\mathbf{g}}^{t(\dag)}]\|^{2}
\end{align}

, where $(a)$ is due to $<\mathbf{a},\mathbf{b}>=\frac{\mathbf{a}^{2}+\mathbf{b}^{2}-(\mathbf{a}-\mathbf{b})^{2}}{2}$. Then only the third term in the right-hand-side is left, since the boundary for the first and second term can be found by Lemma \ref{lemma4}. Based on Assumption \ref{assumption:mu_lips} and denote $t_{c}$ as the start of the communication round of $t$, i.e before training,
\begin{align}\label{lemma3_2}
    &\frac{1}{2}\|\nabla f(\Tilde{\boldsymbol{\omega}}^{t(\dag)})-\mathbb{E}[\hat{\mathbf{g}}^{t(\dag)}]\|^{2} \leq \frac{\mu^{2}}{2K}\sum_{k=1}^{K}\mathbb{E}\|\Tilde{\boldsymbol{\omega}}^{t(\dag)}-\Tilde{\boldsymbol{\omega}}^{t(\dag)}_{k}\|^{2} \notag \\
    &= \frac{\mu^{2}}{2}\sum_{k=1}^{K}\mathbb{E}[\frac{1}{K}\|\Tilde{\boldsymbol{\omega}}^{t_{c}(\dag)}- \frac{1}{K}\sum_{j=1}^{K}\mathbb{E}_{\mathcal{P}_{t,j}}[\frac{\eta}{M}\sum_{j\in \mathcal{P}_{t,j}}\sum_{\tau=t_{c}}^{t_{c}+E_{l}^{*}}\Tilde{g}_{\tau,j}^{t_{c}(\dag)}]  \notag \\
    &\quad  -\Tilde{\boldsymbol{\omega}}^{t_{c}(\dag)} + \mathbb{E}_{\mathcal{P}_{t,k}}[\frac{\eta}{M}\sum_{k\in \mathcal{P}_{t,k}}\sum_{\tau=t_{c}}^{t_{c}+E_{l}^{*}}\Tilde{g}_{\tau,k}^{t_{c}(\dag)}]\|^{2}] \notag \\
    &= \frac{\mu^{2}\eta^{2}}{2}\sum_{k=1}^{K}\mathbb{E}[\frac{1}{K}\|\mathbb{E}_{\mathcal{P}_{t,k}}[\frac{1}{M}\sum_{k\in \mathcal{P}_{t,k}}\sum_{\tau=t_{c}}^{t_{c}+E_{l}^{*}}\Tilde{g}_{\tau,k}^{t_{c}(\dag)}]  \notag \\
    &\quad  -\frac{1}{K}\sum_{j=1}^{K}\mathbb{E}_{\mathcal{P}_{t,j}}[\frac{1}{M}\sum_{j\in \mathcal{P}_{t,j}}\sum_{\tau=t_{c}}^{t_{c}+E_{l}^{*}}\Tilde{g}_{\tau,j}^{t_{c}(\dag)}]\|^{2}] \notag \\
    &=\frac{\mu^{2}\eta^{2}}{2}\sum_{k=1}^{K}\mathbb{E}[\frac{1}{K}\|\mathbb{E}_{\mathcal{P}_{t,k}}[\frac{1}{M}\sum_{k\in \mathcal{P}_{t,k}}\sum_{\tau=t_{c}}^{t_{c}+E_{l}^{*}}(\Tilde{g}_{\tau,k}^{t_{c}(\dag)}-\nabla f_{k}(\Tilde{\boldsymbol{\omega}}^{t_{c}(\dag)}_{k,\tau}))]\notag\\
    &\quad -\frac{1}{K}\sum_{j=1}^{K}\mathbb{E}_{\mathcal{P}_{t,j}}[\frac{1}{M}\sum_{j\in \mathcal{P}_{t,j}}\sum_{\tau=t_{c}}^{t_{c}+E_{l}^{*}}(\Tilde{g}_{\tau,j}^{t_{c}(\dag)}-\nabla f_{j}(\Tilde{\boldsymbol{\omega}}^{t_{c}(\dag)}_{j,\tau}))]\notag \\
    &\quad +\mathbb{E}_{\mathcal{P}_{t,k}}[\frac{1}{M}\sum_{k\in \mathcal{P}_{t,k}}\sum_{\tau=t_{c}}^{t_{c}+E_{l}^{*}}\nabla f_{k}(\Tilde{\boldsymbol{\omega}}^{t_{c}(\dag)}_{k,\tau})]\notag\\
    &\quad -\frac{1}{K}\sum_{j=1}^{K}\mathbb{E}_{\mathcal{P}_{t,j}}[\frac{1}{M}\sum_{j\in \mathcal{P}_{t,j}}\sum_{\tau=t_{c}}^{t_{c}+E_{l}^{*}}\nabla f_{j}(\Tilde{\boldsymbol{\omega}}^{t_{c}(\dag)}_{j,\tau})]\|^{2}].\notag \\
    &\stackrel{(b)}{\leq} \frac{3\mu^{2}\eta^{2}}{2K} \sum_{k=1}^{K}[ \mathbb{E}_{\mathcal{P}_{t,j}} \| \frac{1}{M} \sum_{k\in \mathcal{P}_{t,k}} \sum_{\tau=t_{c}}^{t_{c}+E_{l}^{*}} (\Tilde{g}_{\tau,k}^{t_{c}(\dag)}  \notag \\
    &\qquad  - \nabla f_{k}(\Tilde{\boldsymbol{\omega}}^{t_{c}(\dag)}_{k,\tau}) ) \|^{2} \notag \\
    &\quad + \frac{1}{K}\sum_{j=1}^{K}\mathbb{E}_{\mathcal{P}_{t,j}}[\|\frac{1}{M}\sum_{j\in \mathcal{P}_{t,j}}\sum_{\tau=t_{c}}^{t_{c}+E_{l}^{*}}\nabla f_{j}(\Tilde{\boldsymbol{\omega}}^{t_{c}(\dag)}_{j,\tau})\|^{2}] \notag\\
    &\quad +.\|\mathbb{E}_{\mathcal{P}_{t,k}}[\frac{1}{M}\sum_{k\in \mathcal{P}_{t,k}}\sum_{\tau=t_{c}}^{t_{c}+E_{l}^{*}}\nabla f_{k}(\Tilde{\boldsymbol{\omega}}^{t_{c}(\dag)}_{k,\tau})] \notag\\
    &\qquad -\frac{1}{K}\sum_{j=1}^{K}\mathbb{E}_{\mathcal{P}_{t,j}}[\frac{1}{M}\sum_{j\in \mathcal{P}_{t,j}}\sum_{\tau=t_{c}}^{t_{c}+E_{l}^{*}}\nabla f_{j}(\Tilde{\boldsymbol{\omega}}^{t_{c}(\dag)}_{j,\tau})]\|^{2}] ]\notag\\
   &\stackrel{(c)}{\leq}  \frac{3\mu^{2}\eta^{2}}{2K}[\frac{E_{l}^{*}}{M}\sigma_{l}^{2}+\frac{E_{l}^{*}K\sigma^{2}_{l}}{KM}+\frac{E_{l}^{*}}{M}\sigma_{g}^{2}+\frac{E_{l}^{*}K\sigma^{2}_{g}}{KM}]\notag\\
   & =  \frac{3\mu^{2}\eta^{2}E_{l}^{*}}{M}(\sigma_{l}^{2}+\sigma_{g}^{2})
\end{align}
In the above formulation, the variable $j$ serves to distinguish from $k$, ensuring clarity in the representation of individual client contributions. Here, $\mathcal{P}_{t,k}$ denotes the selection probability of client $k$ at time $t$. The inequality marked as $(b)$ derives from the application of the Cauchy-Schwarz inequality, exemplified by the relation $\|a+b+c\|^{2} \leq 3(\|a\|^{2}+\|b\|^{2}+\|c\|^{2})$. The step labeled as $(c)$ leverages a similar analytical technique, focusing on the aggregation of global gradients, thereby facilitating the derivation of $\sigma^{2}_{g}$, which is in conjunction with Assumption \ref{assumption:bounded_var}.
Substitute the results from Lemma \ref{lemma4} to Eq.\eqref{lemma3_1} with multiplying $\eta$ to conclude the proof for Lemma \ref{lemma3}.

\end{proof}

\subsection{Proof for Theorem 1}\label{appendix_theorem1}
Combining the bounds from Lemma \ref{lemma4} and Lemma \ref{lemma3}, we have:
\begin{align}\label{eq:theorem1}
    &\mathbb{E}[f(\Tilde{\boldsymbol{\omega}}^{t+1(\dag)})] -\mathbb{E}[f(\Tilde{\boldsymbol{\omega}}^{t(\dag)})] \leq \notag \\
    &\quad \mathbb{E}[\frac{\mu}{2}\|\Tilde{\boldsymbol{\omega}}^{t+1(\dag)}-\Tilde{\boldsymbol{\omega}}^{t(\dag)}\|^{2}] \\
    &+ \mathbb{E}[\langle \nabla f(\Tilde{\boldsymbol{\omega}}^{t(\dag)}), \Tilde{\boldsymbol{\omega}}^{t+1(\dag)}-\Tilde{\boldsymbol{\omega}}^{t(\dag)} \rangle] \notag\\
    &= -\frac{\eta}{2}\|\nabla f(\Tilde{\boldsymbol{\omega}}^{t(\dag)})\|^{2}+\frac{\mu-\eta}{2}\mathbb{E}\|\hat{\mathbf{g}}^{t(\dag)}\|^{2}+\frac{3\mu^{2}\eta^{3}E_{l}^{*}}{M}(\sigma_{l}^{2}+\sigma_{g}^{2})\\
    &\leq \frac{6S_{1}(\mu-\eta)-\eta}{2}\|\nabla f(\Tilde{\boldsymbol{\omega}}^{t(\dag)})\|^{2}+(\mu-\eta) S_{1}S_{2} + \notag \\
    &\quad \frac{3\mu^{2}\eta^{3}(\frac{3M+2}{2(M+1)}E_{l}-1)}{M}(\sigma_{l}^{2}+\sigma_{g}^{2}).
\end{align}

It is the fact that $\min{\|\nabla f(\Tilde{\boldsymbol{\omega}}^{t(\dag)}\|^{2}}\leq \frac{\sum_{t=0}^{T-1}\|\nabla f(\Tilde{\boldsymbol{\omega}}^{t(\dag)}\|^{2}}{T}$. Summing up Eq. \eqref{eq:theorem1} from $t=0$ to a large $t=T$ concludes the proof. In detail, Given the inequality for each iteration \(t\):
\begin{equation}
\begin{aligned}
    &\mathbb{E}[f(\Tilde{\boldsymbol{\omega}}^{t+1(\dag)})] - \mathbb{E}[f(\Tilde{\boldsymbol{\omega}}^{t(\dag)})] \leq \frac{6S_{1}(\mu-\eta)-\eta}{2}\|\nabla f(\Tilde{\boldsymbol{\omega}}^{t(\dag)})\|^{2} \\
    &+ (\mu-\eta) S_{1}S_{2} + \frac{3\mu^{2}\eta^{3}\left(\frac{3M+2}{2(M+1)}E_{l}-1\right)}{M}(\sigma_{l}^{2}+\sigma_{g}^{2}).
\end{aligned}
\end{equation}

Summing this inequality from \(t=0\) to \(t=T-1\) yields:
\begin{equation}
\begin{aligned}
    &\mathbb{E}[f(\Tilde{\boldsymbol{\omega}}^{T(\dag)})] - \mathbb{E}[f(\Tilde{\boldsymbol{\omega}}^{0(\dag)})] \\
    & \leq\sum_{t=0}^{T-1} \left( \frac{6S_{1}(\mu-\eta)-\eta}{2}\|\nabla f(\Tilde{\boldsymbol{\omega}}^{t(\dag)})\|^{2} \right) + T \cdot \left[ (\mu-\eta) S_{1}S_{2} \right. \\
    &\quad \left. + \frac{3\mu^{2}\eta^{3}\left(\frac{3M+2}{2(M+1)}E_{l}-1\right)}{M}(\sigma_{l}^{2}+\sigma_{g}^{2}) \right].
\end{aligned}
\end{equation}

To isolate the cumulative gradient norm terms across \(T\) iterations, we divide the inequality by the coefficient of the gradient norm term:
\begin{equation}
\begin{aligned}
    \sum_{t=0}^{T-1} \|\nabla f(&\Tilde{\boldsymbol{\omega}}^{t(\dag)})\|^{2} \\
    &\leq \frac{2}{\eta-6S_{1}(\mu-\eta)} \left( \mathbb{E}[f(\Tilde{\boldsymbol{\omega}}^{0(\dag)})] - \mathbb{E}[f(\Tilde{\boldsymbol{\omega}}^{T(\dag)})] \right) \\
    &\quad + S_{3}T
\end{aligned}
\end{equation}

, where 
$S_{3}=\frac{2}{\eta-6S_{1}(\mu-\eta)} \cdot \left[ (\mu-\eta) S_{1}S_{2} + \frac{3\mu^{2}\eta^{3}(3M+2)E_{l}}{2(M+1)M}(\sigma_{l}^{2}+\sigma_{g}^{2}) \right]$. When T is large, The denominator $\eta-6S_{1}(\mu-\eta)$ is nominated by $\eta$. Then when we choose $\eta\propto\mathcal{O}(\frac{1}{\sqrt{T}\mu})$ and as $T$ is large enough, $S_{3}$ diminishes closely to $0$.

\bibliography{ppfl}

\begin{thebibliography}{10}
\providecommand{\url}[1]{#1}
\csname url@samestyle\endcsname
\providecommand{\newblock}{\relax}
\providecommand{\bibinfo}[2]{#2}
\providecommand{\BIBentrySTDinterwordspacing}{\spaceskip=0pt\relax}
\providecommand{\BIBentryALTinterwordstretchfactor}{4}
\providecommand{\BIBentryALTinterwordspacing}{\spaceskip=\fontdimen2\font plus
\BIBentryALTinterwordstretchfactor\fontdimen3\font minus \fontdimen4\font\relax}
\providecommand{\BIBforeignlanguage}[2]{{%
\expandafter\ifx\csname l@#1\endcsname\relax
\typeout{** WARNING: IEEEtran.bst: No hyphenation pattern has been}%
\typeout{** loaded for the language `#1'. Using the pattern for}%
\typeout{** the default language instead.}%
\else
\language=\csname l@#1\endcsname
\fi
#2}}
\providecommand{\BIBdecl}{\relax}
\BIBdecl

\bibitem{Zhai2022}
X.~Zhai, A.~Kolesnikov, N.~Houlsby, and L.~Beyer, ``Scaling vision transformers,'' in \emph{Proceedings of the IEEE/CVF Conference on Computer Vision and Pattern Recognition}, 2022, pp. 12\,104--12\,113.

\bibitem{Deng2009}
J.~Deng, W.~Dong, R.~Socher, L.-J. Li, K.~Li, and L.~Fei-Fei, ``Imagenet: A large-scale hierarchical image database,'' in \emph{2009 IEEE conference on computer vision and pattern recognition}.\hskip 1em plus 0.5em minus 0.4em\relax Ieee, 2009, pp. 248--255.

\bibitem{McMahan2017}
B.~McMahan, E.~Moore, D.~Ramage, S.~Hampson, and B.~A. y~Arcas, ``Communication-efficient learning of deep networks from decentralized data,'' in \emph{Artificial intelligence and statistics}.\hskip 1em plus 0.5em minus 0.4em\relax PMLR, 2017, pp. 1273--1282.

\bibitem{Li2021}
Q.~Li, Z.~Wen, Z.~Wu, S.~Hu, N.~Wang, Y.~Li, X.~Liu, and B.~He, ``A survey on federated learning systems: vision, hype and reality for data privacy and protection,'' \emph{IEEE Trans Knowl Data Eng}, 2021.

\bibitem{Li2020}
T.~Li, A.~K. Sahu, A.~Talwalkar, and V.~Smith, ``Federated learning: Challenges, methods, and future directions,'' \emph{IEEE signal processing magazine}, vol.~37, no.~3, pp. 50--60, 2020.

\bibitem{Sun2019}
J.~Sun, T.~Chen, G.~Giannakis, and Z.~Yang, ``Communication-efficient distributed learning via lazily aggregated quantized gradients,'' \emph{Advances in Neural Information Processing Systems}, vol.~32, 2019.

\bibitem{Lin}
Y.~Lin, S.~Han, H.~Mao, Y.~Wang, and B.~Dally, ``Deep gradient compression: Reducing the communication bandwidth for distributed training,'' in \emph{International Conference on Learning Representations}, 2018.

\bibitem{Jiang2022}
Y.~Jiang, S.~Wang, V.~Valls, B.~J. Ko, W.-H. Lee, K.~K. Leung, and L.~Tassiulas, ``Model pruning enables efficient federated learning on edge devices,'' \emph{IEEE Transactions on Neural Networks and Learning Systems}, 2022.

\bibitem{isik2022sparse}
B.~Isik, F.~Pase, D.~Gunduz, T.~Weissman, and Z.~Michele, ``Sparse random networks for communication-efficient federated learning,'' in \emph{The Eleventh International Conference on Learning Representations}, 2023.

\bibitem{long2023feddip}
Q.~Long, C.~Anagnostopoulos, S.~P. Parambath, and D.~Bi, ``Feddip: Federated learning with extreme dynamic pruning and incremental regularization,'' in \emph{2023 IEEE International Conference on Data Mining (ICDM)}.\hskip 1em plus 0.5em minus 0.4em\relax IEEE, 2023, pp. 1187--1192.

\bibitem{dai2022dispfl}
R.~Dai, L.~Shen, F.~He, X.~Tian, and D.~Tao, ``Dispfl: Towards communication-efficient personalized federated learning via decentralized sparse training,'' in \emph{International Conference on Machine Learning}.\hskip 1em plus 0.5em minus 0.4em\relax PMLR, 2022, pp. 4587--4604.

\bibitem{Li2021b}
A.~Li, J.~Sun, X.~Zeng, M.~Zhang, H.~Li, and Y.~Chen, ``Fedmask: Joint computation and communication-efficient personalized federated learning via heterogeneous masking,'' in \emph{Proceedings of the 19th ACM Conference on Embedded Networked Sensor Systems}, 2021, pp. 42--55.

\bibitem{ZhangFOMO}
M.~Zhang, K.~Sapra, S.~Fidler, S.~Yeung, and J.~M. Alvarez, ``Personalized federated learning with first order model optimization,'' in \emph{International Conference on Learning Representations}, 2021.

\bibitem{Li2021a}
T.~Li, S.~Hu, A.~Beirami, and V.~Smith, ``Ditto: Fair and robust federated learning through personalization,'' in \emph{International Conference on Machine Learning}.\hskip 1em plus 0.5em minus 0.4em\relax PMLR, 2021, pp. 6357--6368.

\bibitem{Huang2022}
T.~Huang, S.~Liu, L.~Shen, F.~He, W.~Lin, and D.~Tao, ``Achieving personalized federated learning with sparse local models,'' \emph{arXiv preprint arXiv:2201.11380}, 2022.

\bibitem{Wang2023}
\BIBentryALTinterwordspacing
D.~Wang, L.~Shen, Y.~Luo, H.~Hu, K.~Su, Y.~Wen, and D.~Tao, ``Fedabc: targeting fair competition in personalized federated learning,'' in \emph{Proceedings of the Thirty-Seventh AAAI Conference on Artificial Intelligence and Thirty-Fifth Conference on Innovative Applications of Artificial Intelligence and Thirteenth Symposium on Educational Advances in Artificial Intelligence}, ser. AAAI'23/IAAI'23/EAAI'23.\hskip 1em plus 0.5em minus 0.4em\relax AAAI Press, 2023. [Online]. Available: \url{https://doi.org/10.1609/aaai.v37i8.26203}
\BIBentrySTDinterwordspacing

\bibitem{huang2022fusion}
T.~Huang, L.~Shen, Y.~Sun, W.~Lin, and D.~Tao, ``Fusion of global and local knowledge for personalized federated learning,'' \emph{Transactions on Machine Learning Research}, 2022.

\bibitem{yuan2023decentralized}
L.~Yuan, L.~Sun, P.~S. Yu, and Z.~Wang, ``Decentralized federated learning: A survey and perspective,'' \emph{arXiv preprint arXiv:2306.01603}, 2023.

\bibitem{sun2022decentralized}
T.~Sun, D.~Li, and B.~Wang, ``Decentralized federated averaging,'' \emph{IEEE Transactions on Pattern Analysis and Machine Intelligence}, vol.~45, no.~4, pp. 4289--4301, 2022.

\bibitem{zhao2022beer}
H.~Zhao, B.~Li, Z.~Li, P.~Richt{\'a}rik, and Y.~Chi, ``Beer: Fast $ o (1/t) $ rate for decentralized nonconvex optimization with communication compression,'' \emph{Advances in Neural Information Processing Systems}, vol.~35, pp. 31\,653--31\,667, 2022.

\bibitem{yau2023docom}
C.-Y. Yau and H.~T. Wai, ``Docom: Compressed decentralized optimization with near-optimal sample complexity,'' \emph{Transactions on Machine Learning Research}, 2023.

\bibitem{bibikar2022federated}
S.~Bibikar, H.~Vikalo, Z.~Wang, and X.~Chen, ``Federated dynamic sparse training: Computing less, communicating less, yet learning better,'' in \emph{Proceedings of the AAAI Conference on Artificial Intelligence}, vol.~36, no.~6, 2022, pp. 6080--6088.

\bibitem{chen2023efficient}
\BIBentryALTinterwordspacing
D.~Chen, L.~Yao, D.~Gao, B.~Ding, and Y.~Li, ``Efficient personalized federated learning via sparse model-adaptation,'' in \emph{Proceedings of the 40th International Conference on Machine Learning}, ser. Proceedings of Machine Learning Research, A.~Krause, E.~Brunskill, K.~Cho, B.~Engelhardt, S.~Sabato, and J.~Scarlett, Eds., vol. 202.\hskip 1em plus 0.5em minus 0.4em\relax PMLR, 23--29 Jul 2023, pp. 5234--5256. [Online]. Available: \url{https://proceedings.mlr.press/v202/chen23aj.html}
\BIBentrySTDinterwordspacing

\bibitem{lalitha2018fully}
A.~Lalitha, S.~Shekhar, T.~Javidi, and F.~Koushanfar, ``Fully decentralized federated learning,'' in \emph{Third workshop on bayesian deep learning (NeurIPS)}, vol.~2, 2018.

\bibitem{tang2022gossipfl}
Z.~Tang, S.~Shi, B.~Li, and X.~Chu, ``Gossipfl: A decentralized federated learning framework with sparsified and adaptive communication,'' \emph{IEEE Transactions on Parallel and Distributed Systems}, vol.~34, no.~3, pp. 909--922, 2022.

\bibitem{shi2023improving}
\BIBentryALTinterwordspacing
Y.~Shi, L.~Shen, K.~Wei, Y.~Sun, B.~Yuan, X.~Wang, and D.~Tao, ``Improving the model consistency of decentralized federated learning,'' in \emph{Proceedings of the 40th International Conference on Machine Learning}, ser. Proceedings of Machine Learning Research, A.~Krause, E.~Brunskill, K.~Cho, B.~Engelhardt, S.~Sabato, and J.~Scarlett, Eds., vol. 202.\hskip 1em plus 0.5em minus 0.4em\relax PMLR, 23--29 Jul 2023, pp. 31\,269--31\,291. [Online]. Available: \url{https://proceedings.mlr.press/v202/shi23d.html}
\BIBentrySTDinterwordspacing

\bibitem{frankle2019stabilizing}
J.~Frankle, G.~K. Dziugaite, D.~Roy, and M.~Carbin, ``Linear mode connectivity and the lottery ticket hypothesis,'' in \emph{International Conference on Machine Learning}.\hskip 1em plus 0.5em minus 0.4em\relax PMLR, 2020, pp. 3259--3269.

\bibitem{pmlr-v162-rachwan22a}
\BIBentryALTinterwordspacing
J.~Rachwan, D.~Z{\"u}gner, B.~Charpentier, S.~Geisler, M.~Ayle, and S.~G{\"u}nnemann, ``Winning the lottery ahead of time: Efficient early network pruning,'' in \emph{Proceedings of the 39th International Conference on Machine Learning}, ser. Proceedings of Machine Learning Research, K.~Chaudhuri, S.~Jegelka, L.~Song, C.~Szepesvari, G.~Niu, and S.~Sabato, Eds., vol. 162.\hskip 1em plus 0.5em minus 0.4em\relax PMLR, 17--23 Jul 2022, pp. 18\,293--18\,309. [Online]. Available: \url{https://proceedings.mlr.press/v162/rachwan22a.html}
\BIBentrySTDinterwordspacing

\bibitem{hoefler2021sparsity}
T.~Hoefler, D.~Alistarh, T.~Ben-Nun, N.~Dryden, and A.~Peste, ``Sparsity in deep learning: Pruning and growth for efficient inference and training in neural networks,'' \emph{Journal of Machine Learning Research}, vol.~22, no. 241, pp. 1--124, 2021.

\bibitem{evci2020rigging}
U.~Evci, T.~Gale, J.~Menick, P.~S. Castro, and E.~Elsen, ``Rigging the lottery: Making all tickets winners,'' in \emph{International Conference on Machine Learning}.\hskip 1em plus 0.5em minus 0.4em\relax PMLR, 2020, pp. 2943--2952.

\bibitem{diao2022pruning}
E.~Diao, G.~Wang, J.~Zhang, Y.~Yang, J.~Ding, and V.~Tarokh, ``Pruning deep neural networks from a sparsity perspective,'' in \emph{The Eleventh International Conference on Learning Representations}, 2022.

\bibitem{yan2023criticalfl}
G.~Yan, H.~Wang, X.~Yuan, and J.~Li, ``Criticalfl: A critical learning periods augmented client selection framework for efficient federated learning,'' in \emph{Proceedings of the 29th ACM SIGKDD Conference on Knowledge Discovery and Data Mining}, 2023, pp. 2898--2907.

\bibitem{jastrzebski2021catastrophic}
S.~Jastrzebski, D.~Arpit, O.~Astrand, G.~B. Kerg, H.~Wang, C.~Xiong, R.~Socher, K.~Cho, and K.~J. Geras, ``Catastrophic fisher explosion: Early phase fisher matrix impacts generalization,'' in \emph{International Conference on Machine Learning}.\hskip 1em plus 0.5em minus 0.4em\relax PMLR, 2021, pp. 4772--4784.

\bibitem{tschandl2018ham10000}
P.~Tschandl, C.~Rosendahl, and H.~Kittler, ``The ham10000 dataset, a large collection of multi-source dermatoscopic images of common pigmented skin lesions,'' \emph{Scientific data}, vol.~5, no.~1, pp. 1--9, 2018.

\bibitem{krizhevsky2009learning}
A.~Krizhevsky, G.~Hinton \emph{et~al.}, ``Learning multiple layers of features from tiny images,'' 2009.

\bibitem{krizhevsky2012imagenet}
A.~Krizhevsky, I.~Sutskever, and G.~E. Hinton, ``Imagenet classification with deep convolutional neural networks,'' \emph{Advances in neural information processing systems}, vol.~25, 2012.

\bibitem{he2016deep}
K.~He, X.~Zhang, S.~Ren, and J.~Sun, ``Deep residual learning for image recognition,'' in \emph{Proceedings of the IEEE conference on computer vision and pattern recognition}, 2016, pp. 770--778.

\bibitem{vggsimonyan2015very}
K.~Simonyan and A.~Zisserman, ``Very deep convolutional networks for large-scale image recognition,'' in \emph{3rd International Conference on Learning Representations (ICLR 2015)}.\hskip 1em plus 0.5em minus 0.4em\relax Computational and Biological Learning Society, 2015.

\bibitem{feeney2001investigating}
L.~M. Feeney and M.~Nilsson, ``Investigating the energy consumption of a wireless network interface in an ad hoc networking environment,'' in \emph{Proceedings IEEE INFOCOM 2001. Conference on computer communications. Twentieth annual joint conference of the IEEE computer and communications society (Cat. No. 01CH37213)}, vol.~3.\hskip 1em plus 0.5em minus 0.4em\relax IEEE, 2001, pp. 1548--1557.

\bibitem{luo2021cost}
B.~Luo, X.~Li, S.~Wang, J.~Huang, and L.~Tassiulas, ``Cost-effective federated learning design,'' in \emph{IEEE INFOCOM 2021-IEEE Conference on Computer Communications}.\hskip 1em plus 0.5em minus 0.4em\relax IEEE, 2021, pp. 1--10.

\bibitem{zhou2022joint}
X.~Zhou, J.~Zhao, H.~Han, and C.~Guet, ``Joint optimization of energy consumption and completion time in federated learning,'' in \emph{2022 IEEE 42nd International Conference on Distributed Computing Systems (ICDCS)}.\hskip 1em plus 0.5em minus 0.4em\relax IEEE, 2022, pp. 1005--1017.

\bibitem{wan2021convergence}
S.~Wan, J.~Lu, P.~Fan, Y.~Shao, C.~Peng, and K.~B. Letaief, ``Convergence analysis and system design for federated learning over wireless networks,'' \emph{IEEE Journal on Selected Areas in Communications}, vol.~39, no.~12, pp. 3622--3639, 2021.

\end{thebibliography}
\bibliographystyle{IEEEtran}


\newpage

\end{document}